\documentclass[letterpaper]{article} 
\usepackage{aaai25}  
\usepackage{times}  
\usepackage{helvet}  
\usepackage{courier}  
\usepackage[hyphens]{url}  
\usepackage{graphicx} 
\urlstyle{rm} 
\usepackage{natbib}  
\usepackage{caption} 
\frenchspacing  
\setlength{\pdfpagewidth}{8.5in}  
\setlength{\pdfpageheight}{11in}  


\usepackage{algorithm}
\usepackage{algorithmic}

\usepackage{newfloat}
\usepackage{listings}
\DeclareCaptionStyle{ruled}{labelfont=normalfont,labelsep=colon,strut=off} 
\lstset{%
	basicstyle={\footnotesize\ttfamily},
	numbers=left,numberstyle=\footnotesize,xleftmargin=2em,
	aboveskip=0pt,belowskip=0pt,%
	showstringspaces=false,tabsize=2,breaklines=true}
\floatstyle{ruled}
\newfloat{listing}{tb}{lst}{}
\floatname{listing}{Listing}
%
\pdfinfo{
/TemplateVersion (2025.1)
}

\usepackage{amsmath}
\usepackage{amssymb}
\usepackage{mathtools}
\usepackage{amsthm}

\usepackage[capitalize,noabbrev]{cleveref}

\newcommand{\argmin}{\mathop{\mathrm{argmin}}}

\newcommand{\encoder}{{\mathrm{encoder}}}
\newcommand{\decoder}{{\mathrm{decoder}}}

\usepackage{etoolbox}

\usepackage{subcaption}
\usepackage{amsfonts}
\usepackage{amsmath}
\usepackage{mathrsfs}
\usepackage{amsthm}
\usepackage{mathabx}
\usepackage{xcolor}
\usepackage{bbold}
\usepackage{dsfont}
\usepackage{enumitem}
\usepackage{array, booktabs}


\newcommand*{\eg}{\emph{e.g.}{}}
\newcommand*{\ie}{\emph{i.e.}{}}

\newtheorem{theorem}{Theorem}
\newtheorem*{theorem*}{Theorem}
\newtheorem{lemma}[theorem]{Lemma}

\newtheorem{remark}[theorem]{Remark}

\newtheorem{assumption}[theorem]{Assumption}

\makeatletter
\newcommand{\printfnsymbol}[1]{%
  \textsuperscript{\@fnsymbol{#1}}%
}
\makeatother


\def\E{\mathbb{E}}
\def\P{\mathbb{P}}

\def\cD{\mathcal{D}}

\def\cS{\mathcal{S}}

\def\cX{\mathcal{X}}

\def\cZ{\mathcal{Z}}

\title{Black-Box Optimization with Implicit Constraints for Public Policy}

\author{
    Wenqian Xing\textsuperscript{\rm 1}, JungHo Lee\textsuperscript{\rm 2}, Chong Liu\textsuperscript{\rm 3}, Shixiang Zhu\textsuperscript{\rm 2}
}
\affiliations {
    \textsuperscript{\rm 1}Stanford University\\
    \textsuperscript{\rm 2}Carnegie Mellon University\\
    \textsuperscript{\rm 3}University at Albany, State University of New York\\
    wxing@stanford.com, junghol@andrew.cmu.edu, cliu24@albany.edu, shixianz@andrew.cmu.edu
}

\begin{document}

\maketitle

\begin{abstract}
Black-box optimization (BBO) has become increasingly relevant for tackling complex decision-making problems, especially in public policy domains such as police redistricting. However, its broader application in public policymaking is hindered by the complexity of defining feasible regions and the high-dimensionality of decisions. This paper introduces a novel BBO framework, termed as the Conditional And Generative Black-box Optimization (\texttt{CageBO}).
This approach leverages a conditional variational autoencoder to learn the distribution of feasible decisions, enabling a two-way mapping between the original decision space and a simplified, constraint-free latent space. 
The \texttt{CageBO} efficiently handles the implicit constraints often found in public policy applications, allowing for optimization in the latent space while evaluating objectives in the original space. 
We validate our method through a case study on large-scale police redistricting problems in Atlanta, Georgia. Our results reveal that our \texttt{CageBO} offers notable improvements in performance and efficiency compared to the baselines.
\end{abstract}

\section{Introduction}
\label{sec:intro}

{In recent years, black-box optimization (BBO) has emerged as a critical approach in addressing complex decision-making challenges across various domains, particularly when dealing with objective functions that are difficult to analyze or explicitly define. Unlike traditional optimization methods that require gradient information or explicit mathematical formulations, black-box optimization treats the objective function as a ``black box'' that can be queried for function values but offers no additional information about its structure \citep{pardalos2021black}. This methodology is especially valuable for designing public policy, such as police redistricting \citep{zhu2020data, zhu2022data}, site selection for emergency service systems \citep{xing2022optimal}, hazard assessment \citep{xie2021landslide} and public healthcare policymaking \citep{chandak2020epidemiologically}. Policymakers and researchers in these areas frequently encounter optimization problems embedded within complex human systems, where decision evaluations are inherently implicit, and conducting them can be resource-intensive. Black-box optimization, therefore, stands out as a potent tool for navigating through these intricate decision spaces, providing a means to optimize outcomes without necessitating a detailed understanding of the underlying objective function's analytical properties.
}

\begin{figure}[!t]
\centering
\begin{subfigure}[b]{0.32\linewidth}
    \centering
    \includegraphics[width=\linewidth]{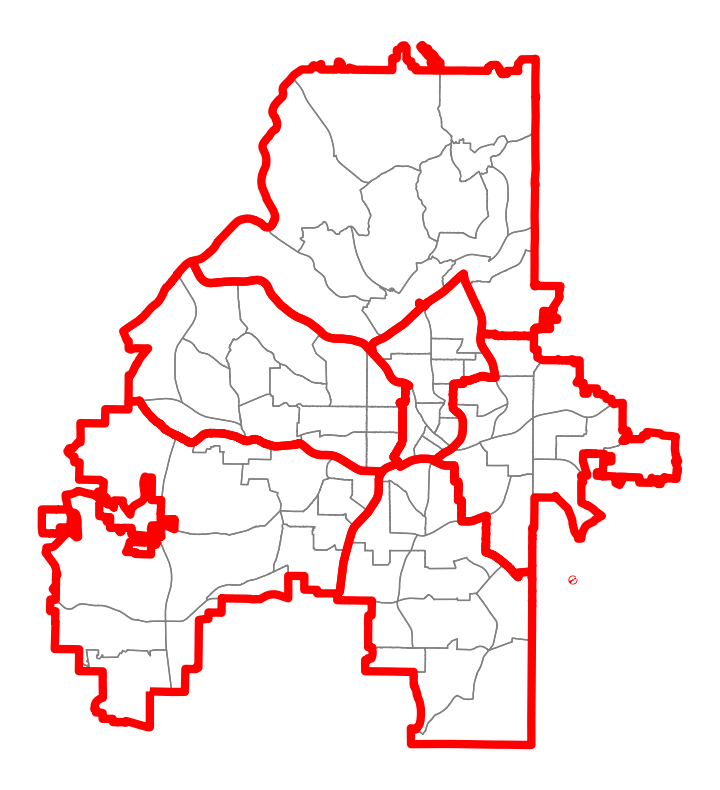}
    \caption{Pre-2019}
\end{subfigure}
\hfill
\begin{subfigure}[b]{0.32\linewidth}
    \centering
    \includegraphics[width=\linewidth]{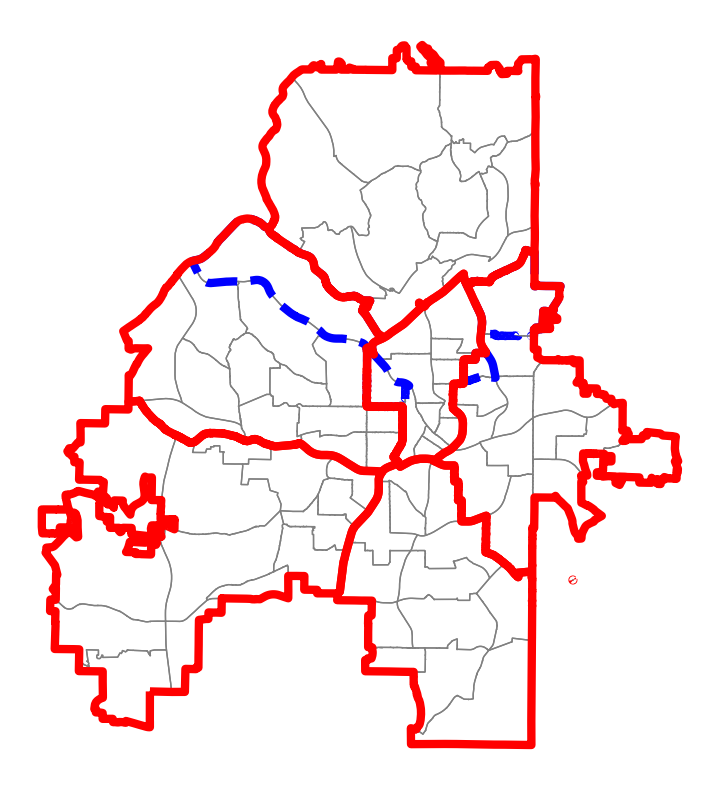}
    \caption{Post-2019}
\end{subfigure}
\hfill
\begin{subfigure}[b]{0.32\linewidth}
    \centering
    \includegraphics[width=\linewidth]{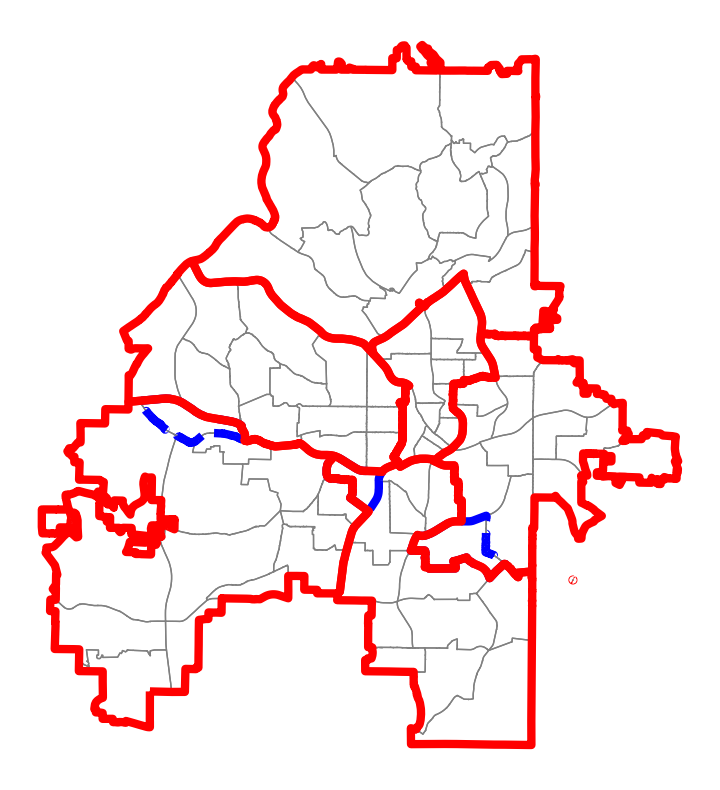}
    \caption{Infeasible}
\end{subfigure}
\caption{
An illustrative example showing the difficult-to-define constraints using three districting plans for the Atlanta Police Department (APD). Gray lines represent the basic geographical units patrolled by the police, red lines outline the districting plans, and dashed blue lines highlight the changes made to the pre-2019 plan. (a) and (b) are feasible plans implemented by the APD pre and post 2019. (c) appears to be a feasible plan but was ultimately rejected by the APD because it overlooked traffic constraints and inadvertently cut off access to some highways with its zone boundaries.
}
\label{fig:districting-exp}
\end{figure}

However, the broader application prospects of BBO in public policymaking are hampered due to two major hurdles:
(1) Defining the feasible region or setting clear constraints for decisions is inherently complicated for real human systems. 
Policymakers often encounter a myriad of both explicit and implicit rules when making an optimal decision, adding a significant layer of complexity to the optimization process. 
For instance, in police districting, police departments often organize their patrol forces by dividing the geographical region of a city into multiple patrol areas called zones. Their goal is to search for the optimal districting plan that minimizes the workload variance across zones \citep{larson1974hypercube, larson1981urban, zhu2020data, zhu2022data}. 
As shown in Figure~\ref{fig:districting-exp}, a well-conceived plan necessitates each zone to adhere to certain shape constraints (\eg, contiguity and compactness) that are analytically challenging to formulate \citep{shirabe2009districting}, while also taking socio-economic or political considerations into account (\eg, ensuring fair access to public facilities). 
This creates a web of \emph{implicit constraints} \citep{choi2000optimization, audet2020binary} that are elusive to define clearly and have a complex high-dimensional structure (e.g., manifold shape), making the assessment of feasible region nearly as expensive as evaluating the objective itself \citep{gardner2014bayesian}.
(2) The decisions for public policy are usually high-dimensional, presenting a significant computational hurdle to utilizing traditional BBO methods \citep{luong2019bayesian, wan2021think, binois2022survey}. For example, the redistricting problem can be formulated by mixed-integer programming, grappling with hundreds or even thousands of decision variables even for medium-sized service systems \citep{zhu2020data, zhu2022data}. 

Despite the difficulty in formulating constraints for decisions in public policymaking, the rich repository of historical decisions adopted by the practitioners, combined with the increasingly easier access to human systems \citep{van2017data, yu2021data}, offer a wealth of decision samples. 
Collecting these samples might involve seeking guidance from official public entities on decision feasibility or generating decisions grounded in domain expertise, bypassing the need to understand the explicit form of the constraints. 
These readily available decisions harbor implicit knowledge that adeptly captures the dynamics of implicit constraints, providing a unique opportunity to skillfully address these issues. 
This inspires us to develop a \emph{conditional generative representation model} that maps the feasible region in the original space to a lower-dimensional latent space, which encapsulates the key pattern of these implicit constraints. 
As a result, the majority of existing BBO methods can be directly applied to solve the original optimization problem in this latent space without constraints.

In this paper, we aim to solve Implicit-Constrained Black-Box Optimization (ICBBO) problems.
In these problems, \emph{the constraints are not analytically defined, however the feasibility of a given decision can be easily verified.}
We introduce a new approach called Conditional And Generative Black-box Optimization (\texttt{CageBO}).
Using a set of labeled decisions as feasible or infeasible, we first construct a conditional generative representation model based on the conditional variational autoencoder (CVAE) \citep{sohn2015learning} to explore and generate new feasible decisions within the intricate feasible region.
To mitigate the impact of potentially poorly generated decisions, we adopt a post-decoding process that aligns these decisions closer to the nearest feasible ones. 
Furthermore, we incorporate this conditional generative representation model as an additional surrogate into our black-box optimization algorithm,
allowing for a two-way mapping between the feasible region in the original space and an unconstrained latent space. 
As a result, while objective function is evaluated in the original space, the black box optimization algorithm is performed in the latent space without constraints. 
We prove that our proposed algorithm can achieve a no-regret upper bound with a judiciously chosen number of observations in the original space.
Finally, we validate our approach with numerical experiments on both synthetic and real-world datasets, including applying our method to address large-scale redistricting challenges within police operation systems in Atlanta, Georgia, demonstrating its significant empirical performance and efficiency over existing methods.

\textbf{Contributions.} Our contributions are summarized as:
\begin{enumerate}[topsep=0pt,itemsep=-1ex,partopsep=1ex,parsep=1ex]
    \item We formulate a novel class of optimization problems in public policy, termed Implicit Constrained Black-Box Optimization (ICBBO), and develop the \texttt{CageBO} algorithm that effectively tackles high-dimensional ICBBO problems. 
    \item We introduce a conditional generative representation model that constructs a lower-dimensional, constraint-free latent space, enabling BBO algorithms to efficiently explore and generate feasible candidate solutions. This model supports the development of innovative public policy decisions by facilitating the exploration of new and effective policy options.
    \item By establishing a no-regret expected cumulative regret bound, we demonstrate that our \texttt{CageBO} algorithm can consistently identify the global optimal solution, ensuring the development of near-optimal public policy decisions.
    \item We apply our \texttt{CageBO} algorithm to the Atlanta police redistricting problem using both synthetic and real datasets. Our results demonstrate its superior performance compared to baseline methods, highlighting its potential to significantly enhance efficiency in real-world public policy applications.
\end{enumerate}

\textbf{Related work.}
Black-box optimization, a.k.a. zeroth-order optimization or derivative-free optimization, is a long-standing challenging problem in optimization and machine learning. Existing work either assumes the underlying objective function is drawn from some Gaussian process \citep{williams2006gaussian} or some parametric function class \citep{dai2022sample,liu2023global}. The former one is usually known as Bayesian optimization (BO), with the Gaussian process serving as the predominant surrogate model. BO has been widely used in many applications, including but not limited to neural network hyperparameter tuning \citep{kandasamy2020tuning,turner2021bayesian}, material design \citep{ueno2016combo,zhang2020bayesian}, chemical reactions \citep{guo2023bayesian}, and public policy \citep{xing2022optimal}.

In numerous real-world problems, optimization is subject to various types of constraints. 
Eriksson proposes scalable BO with known constraints in high dimensions \citep{eriksson2021scalable}. Letham explores BO in experiments featuring noisy constraints \citep{letham2019constrained}. Gelbart pioneered the concept of BO with unknown constraints \citep{gelbart2014bayesian}, later enhanced by Aria through the ADMM framework \citep{ariafar2019admmbo}. Their constraints are unknown due to uncertainty but can be evaluated using probabilistic models. In addition, Choi and Audet study the unrelaxable hidden constraints in a similar way \citep{choi2000optimization, audet2020binary}, where the feasibility of a decision can be evaluated by another black-box function. In contrast, the implicit constraints are often unknown due to the lack of analytical formulations in many public policy-making problems.

Building on latent space methodologies, Varol presented a constrained latent variable model integrating prior knowledge \citep{varol2012constrained}. Eissman \citep{eissman2018bayesian} presents a VAE-guided Bayesian optimization algorithm with attribute adjustment. Deshwal and Doppa focus on combining latent space and structured kernels over combinatorial spaces \citep{deshwal2021combining}. Maus further investigates structured inputs in local spaces \citep{maus2022local}, and Antonova introduces dynamic compression within variational contexts \citep{antonova2020bayesian}. However, it's worth noting that none of these studies consider any types of constraints in their methodologies.

In public policy-making, implicit constraints are often a critical consideration in queueing service models. The pioneering work on the hypercube queueing model, introduced in Larson's paper \citep{larson1974hypercube} and further developed in his book \citep{larson1981urban}, established a foundational framework for analyzing spatial queueing systems, particularly in the context of emergency services like police and ambulance operations. This model is frequently employed as a black-box performance measure for public resource deployment \citep{de2015incorporating, ansari2017maximum, zhu2020data}. However, real-world service systems are often subject to various constraints on feasible policies. 
These constraints, such as fairness \citep{argyris2022fair} and continuity \citep{zhu2022data}, are typically implicit and lack explicit analytical formulations, posing challenges for effective optimization.

\section{Preliminaries}

\textbf{Problem setup.}
We consider a decision space denoted by $\mathcal{X} \subseteq [0, 1]^d$ where $1$ can be replaced with any universal constant w.l.o.g., which represents a specific region of a $d$-dimensional real space. Suppose there exists a black-box objective function, $f: \mathcal{X} \mapsto \mathbb{R}$, that can be evaluated, albeit at a substantial cost. Assume we can obtain a noisy observation of $f(x)$, denoted as $\hat{f}(x) = f(x) + \epsilon$, where $\epsilon$ follows a $\sigma$-sub-Gaussian noise distribution. The goal is to solve the following optimization problem:
\begin{align*}
\min_{x \in \cX} ~ f(x),\quad \text{s.t.} ~ x \in \cS,
\end{align*}
where \( \mathcal{S} \subseteq \mathcal{X} \) represents the feasible region, defined by a set of implicit constraints.

{ Given the analytical expressions of the implicit constraints are not directly accessible, explicitly formulating these constraints is not feasible. However, they can still be evaluated through a feasibility oracle $h(\cdot): \mathcal{X} \mapsto \{0, 1\}$, where a value of 1 indicates feasible, and 0 indicates infeasible. Now suppose we have access to a human system that provides labeled decisions. Denote a set of labeled decisions by $\mathcal{D} = \{(x_i, c_i)\}, i \in [n]$, where $x_i \in \mathcal{X}$ represents the $i$th decision and $c_i \in \{0, 1\}$ represents its feasibility. Assume $\mathcal{D}$ has a good coverage of the decision space of interest.
In practice, decisions can be derived from consultations within human systems or crafted using domain expertise, with the feasibility oracle effectively acting as a surrogate for a policymaker.
For example, new feasible districting plans in Figure~\ref{fig:districting-exp} can be created by first randomly altering the assignments of border regions and then checking their feasibility through police consultations.}

\textbf{Bayesian optimization.}
The BO algorithms prove especially valuable in scenarios where the evaluation of the objective function is costly or time-consuming, or when the gradient is unavailable. This approach revolves around constructing a \emph{surrogate model} of the objective function and subsequently employing an \emph{acquisition function} based on this surrogate model to determine the next solution for evaluation. For the minimization problem, a popular choice of surrogate model is the Gaussian process with the lower confidence bound (LCB) \citep{srinivas2009gaussian} serving as the acquisition function. 

The Gaussian process (GP) in the space $\mathcal{X}$, denoted by $\text{GP}({\mu}, {k}; \mathcal{X})$, is specified by a mean function ${\mu(x)}$ and a kernel function $k(x, x^\prime)$, which indicates the covariance between the two arbitrary decisions $x$ and $x^\prime$. 
The GP captures the joint distribution of all the evaluated decisions and their observed objective function values. 
We reference the standard normal distribution with zero mean and an identity matrix $I$
as its variance by $\mathcal{N}(0, I)$,
and let $Y = \{\hat{f}(x) \mid x \in X \}$ represent the corresponding set of objective function values.
For a new decision $\tilde{x}$, the joint distribution of $Y$ and its objective function value $\tilde{y}$ of $\tilde{x}$ is
\begin{equation*}
\begin{bmatrix}
Y \\
\tilde{y}
\end{bmatrix} \sim \mathcal{N}\left(\mu\left(\begin{bmatrix}
X \\
\tilde{x}
\end{bmatrix}\right), \begin{bmatrix}
K(X, X)+\sigma^2 I & K(X, \tilde{x}) \\
K(X, \tilde{x})^\top & k(\tilde{x}, \tilde{x})
\end{bmatrix}\right),
\end{equation*}
where $\sigma^2$ represent the variance of the observed noise $\epsilon$. 
Here $K(X, X)=\left ( k\left(x, x^{\prime}\right) \right )_{x, x^{\prime} \in \mathcal{X}}$ denotes the covariance matrix between the previously evaluated decisions and $K(X, \tilde{x})=\left (  k\left(x, \tilde{x}\right) \right )_{x \in \mathcal{X}}$ denotes the covariance vector between the previously evaluated and the new decisions.

\section{Proposed Method}

\begin{figure}
\begin{center}
  \includegraphics[width=1\linewidth]{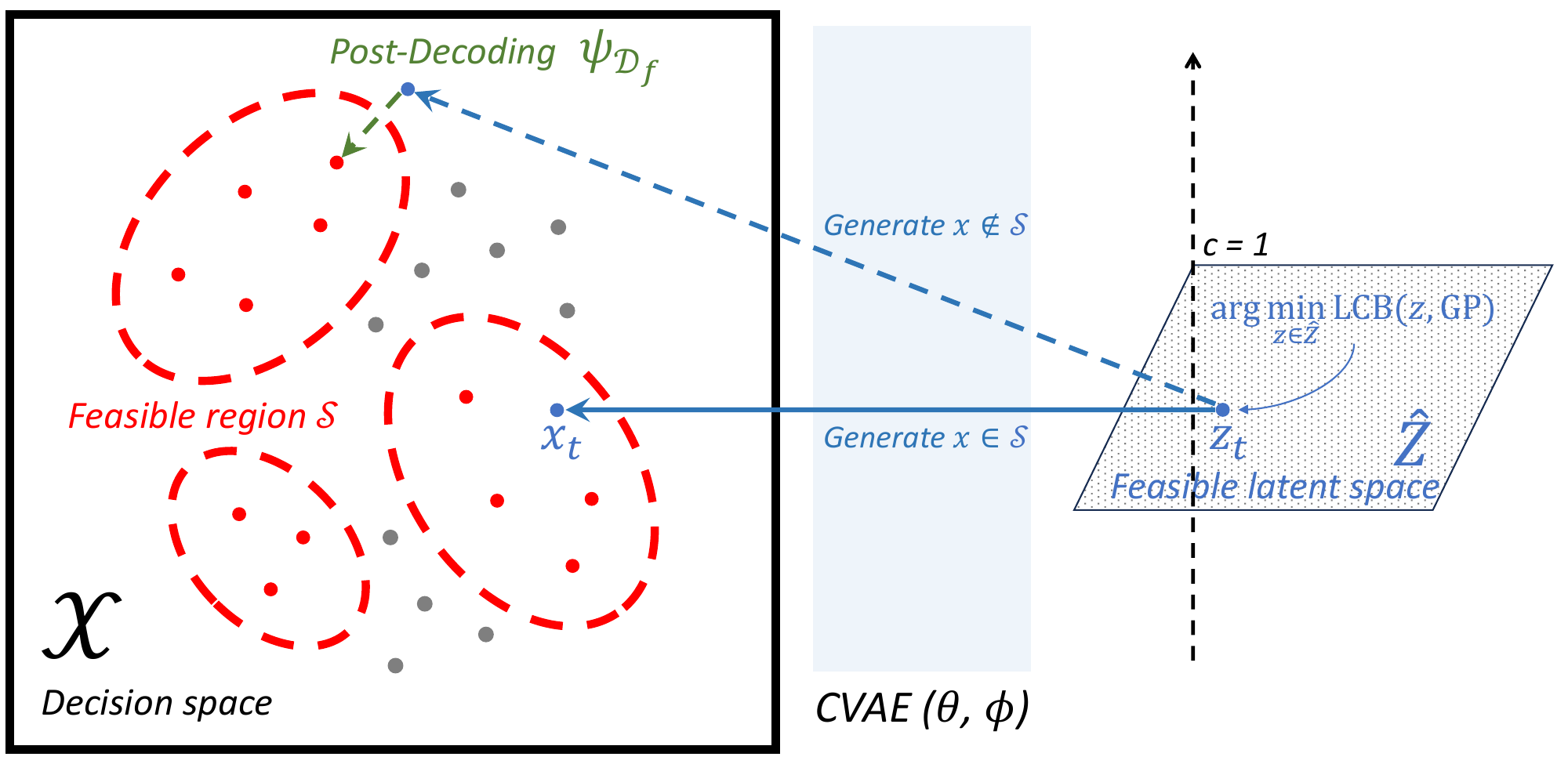}
\end{center}
  \caption{An illustration of the \texttt{CageBO} algorithm. Red dots represent observed feasible decisions, and grey dots denote observed infeasible decisions. The red dotted circle illustrates the complex feasible region that is not directly accessible. For each iteration, a new decision $z_{t}$ is then chosen in the feasible latent space $\hat{Z}$ by minimizing the lower confidence bound (LCB). This new decision $z_{t}$ is mapped back to the original space as $x_t$ using $\decoder_{\theta}$. If the CVAE model is well-trained, the newly generated \(x_t\) is highly likely to reside within the feasible set $\cS$. In case \(x_t \notin \cS \), the post-decoding process \(\psi_{\mathcal{D}_f}\) will adjust it to the nearest observed feasible decision.}
  \label{fig:illustration}
\end{figure}

The main idea of our method is to learn a conditional generative representation of feasible decisions to effectively overcome the complications posed by implicit constraints. Then we perform Bayesian optimization within this low-dimensional representation space rather than the constrained original decision space. The latent space $\mathcal{Z} \subseteq \mathbb{R}^{d}$ is learned using a conditional variational autoencoder (CVAE), which leverages the set of labeled decisions $\mathcal{D}$ as the training data.

To be specific, we train the CVAE model by maximizing $\mathcal{L}_{\mathrm{ELBO}}$ on the training data $\mathcal{D}$. This CVAE model enables a two-way mapping of decisions between the original and latent spaces, through an \emph{encoder$_{\phi}$} from the learned conditional distribution \( q_{\phi}(z | x, c) \) and a \emph{decoder$_{\theta}$} from the learned conditional posterior \( p_{\theta}(x|z, c) \).
Given an initial set of decisions $X_0$ randomly sampled from the feasible set $\mathcal{D}_f$, we first evaluate their objective values denoted as $Y_0 = \{\hat{f}(x) \mid x \in X_0\}$.
These decisions are then encoded to the latent space, represented by $Z_0 \subset \mathcal{Z}$. 
As illustrated by Figure~\ref{fig:illustration}, for each iteration $t$, our algorithm is performed as follows:
(1) Train a surrogate model $\text{GP}(\mu, k;\mathcal{Z})$ using the current latent decisions $Z_{t-1}$ and their observed values $Y_{t-1}$. 
(2) Identify the next latent decision candidate \( z_{t} \) by searching on the feasible latent space $\hat{Z}$, defined as the support of the latent variable distribution conditioned on feasibility $c=1$, and select the one exhibiting the lowest LCB value.
(3) Decode the latent decision candidate $z_{t}$ to the original space, yielding a new decision ${x}_{t}$.
(4) Assess the feasibility of \( {x}_{t} \) via the feasibility oracle \( h \). If \( {x}_{t} \) is feasible, evaluate its objective value and include it in the feasible set \( \mathcal{D}_f \). Otherwise, apply the post-decoding process \( \psi_{\mathcal{D}_f} \) before evaluating the objective value.
(5) Update the observations \( Z_t \) and \( Y_t \) accordingly.
The \texttt{CageBO} algorithm iterates a total of $T$ times.
The proposed method is summarized in Algorithm \ref{algo_BO}. 
In the remainder of this section, we explain each component of our proposed method at length.

\subsection{Conditional Generative Representation} \label{FSL}

To address the challenge of implicit constraints in ICBBO problems, we introduce a conditional generative representation model based on a CVAE in our framework. 
The rationale behind our approach is threefold: (1) To encode the original decision space with implicit constraints into a compact, continuous, and constraint-free latent space. (2) To condense the dimensionality of the original problem, making BBO more efficient in the latent space. (3) To actively search for solutions with a high likelihood of being feasible. 
As illustrated by Figure~\ref{fig:trajectory}, we observe that BBO can navigate within a feasible latent space, which offers a simpler structure in the objective function, facilitated by the conditional generative representation model.

Suppose there exists a joint distribution between the decision variable \( x \in \mathcal{X} \), which can be defined in a high-dimensional hybrid space of discrete and continuous variables, and a continuous latent variable \( z \in \mathcal{Z} \subseteq \mathbb{R}^{d} \), conditioned on the feasibility criterion \( c \in \{0, 1\} \).
We model the conditional distributions \( q_{\phi}(z | x, c) \), \( p_{\theta}(x|z, c) \), and the conditional prior $p_{}(z|c)$ through the use of neural networks as outlined in \citep{pinheiro2021variational}. Here, \( \phi \) and \( \theta \) represent the neural network weights associated with each respective distribution.
Since it is intractable to directly optimize the marginal maximum likelihood of $x$, the evidence lower bound (ELBO) \citep{jordan1999introduction} of the log-likelihood is derived as follows
\begin{align}
\mathcal{L}_{\mathrm{ELBO}} &= w(c)\underset{q_{\phi}(z | x, c)}{\mathbb{E}}\left[\log p_{\theta}(x|z, c)\right] \nonumber \\
&\quad -\eta \mathrm{D}_{\mathrm{KL}}\left(q_{\phi}(z | x, c) \| p(z|c)\right),\label{eq:elbo}
\end{align}
where \( w(c) \) denotes the weighting function associated with the feasibility \( c \), and \( \eta \) is a hyperparameter controlling the penalty ratio. Here we assume the conditional prior of $z$ follows a standard Gaussian distribution $\mathcal{N}(0, I)$. 
The first term in \eqref{eq:elbo} can be considered as the reconstruction error between the input and reconstructed decisions, and the second term is the Kullback–Leibler (KL) divergence between the conditional prior of the latent variable $z$ and the learned posterior $q_{\phi}(z | x, c)$.  

\begin{algorithm}[!t]
\caption{\texttt{CageBO}}\label{algo_BO}
\begin{algorithmic}
\STATE {\bfseries Input:} Labeled decisions $\mathcal{D}$; Objective function $\hat{f}$; Feasibility oracle $h$; Hyper-parameters $T$.

\STATE Train a CVAE$(p_{\theta}, q_{\phi})$ by maximizing $\mathcal{L}_{\mathrm{ELBO}}$ in $\mathcal{D}$;

\STATE Feasible set $\mathcal{D}_f = \{ x_i \in \mathcal{X} \mid (x_i, c_i) \in \mathcal{D} \text{ and } c_i = 1 \}$;

\STATE Initialize $X_0 \subseteq \mathcal{D}_f$, $Y_0 = \{\hat{f}({x}) \mid {x}\in X_0\}$;

$Z_0 \leftarrow \encoder_{\phi}(X_0)$; \hfill // Encoding

\FOR{$t = 1 \textbf{ to } T$}

\STATE Train $\text{GP}(\mu, k; \mathcal{Z})$ using $(Z_{t-1}, Y_{t-1})$;

\STATE $\hat{Z} = \mathrm{supp}(q_{\phi}(\cdot | c=1))$;


\STATE ${z}_t \in \argmin_{z\in \Hat{Z}} \operatorname{LCB}\left(z, \text{GP} \right)$;

\STATE ${x_{t} } \leftarrow \decoder_{\theta}({z}_t)$; \hfill // Decoding

\IF{$h(x_t) = 1$}{
    \STATE $y_{t} = \hat{f}({x}_{t})$, $\mathcal{D}_f \leftarrow \mathcal{D}_f \cup \{x_t\}$;
}
\ELSE

   \STATE  $y_t = \hat{f}(\psi_{\mathcal{D}_f}({x_t}))$; \hfill // Post-decoding

\ENDIF

\STATE $Z_t \leftarrow Z_{t-1} \cup \{z_{t}\} $, $Y_{t} \leftarrow Y_{t-1} \cup \{y_t\}$;

\ENDFOR

\textbf{return} $x^* \in \text{argmin}_{{x}\in X_T} \hat{f}({x})$
\end{algorithmic}
\end{algorithm}

\textbf{Post-decoding process.} 
To ensure the decoded decision ${x} \in \cX$ is subject to the implicit constraints, we introduce a post-decoding process in addition to the decoder, denoted by $\psi_{\mathcal{S}}: \cX \mapsto \cS$.
This function projects any given decision ${x} \in \cX$ to the closest feasible decision $\hat{x} \in \cS$. However, the presence of implicit constraints prevents us from achieving an exact projection.
As a workaround, we search within the observed feasible set $\mathcal{D}_f = \{ x_i \in \mathcal{X} \mid (x_i, c_i) \in \mathcal{D} \text{ and } c_i = 1 \}$, rather than the unattainable feasible region $\mathcal{S}$, and find a feasible decision $\hat{x} \in D_f$ that is the closest to the decoded decision ${x}$ as an approximate. The distance between any two decisions is measured by the Euclidean norm $||\cdot||_2$. Formally,
\begin{equation}
    \psi_{\mathcal{D}_f}({x}) = \argmin_{\hat{x} \in \mathcal{D}_f} ||\hat{x} - x||_{2}.
\end{equation}
The post-decoding process is initiated only for decoder-generated decisions \( x \) that are infeasible. Feasible decisions are directly incorporated into the observed feasible set \( \mathcal{D}_f \). Through iteratively expanding this observed feasible set, we enhance the accuracy of this process by improving the coverage of $\mathcal{D}_f$ on the underlying feasible region \( \mathcal{S} \). This is further detailed in the ablation study in the appendix
\footnote{Appendix is available on \url{https://arxiv.org/abs/2310.18449}.\\
Code is available on \url{https://github.com/wenqian-xing/CageBO}.}.

\subsection{Surrogate Model} \label{SM}

Now we define an \textit{indirect objective function} $g(\cdot)$ which maps from $\mathcal{Z} \mapsto \mathbb{R}$:
\begin{equation}
g(z) = 
\begin{cases} 
f(\decoder_{\theta}(z)), & \text{if } h(\decoder_{\theta}(z)) = 1, \\
f(\psi_{\mathcal{D}_f}(\decoder_{\theta}(z))), & \text{if } h(\decoder_{\theta}(z)) = 0.
\end{cases}
\end{equation}
The indirect objective function measures the objective value of the latent variable via the decoding and the potential post-decoding process. Given that the objective function \( f \) is inherently a black-box function, it follows that the indirect objective function \( g \) is also a black-box function.

We use a GP as our surrogate model of the indirect objective function $g$, denoted by $\text{GP}({\mu}, {k}; \mathcal{Z})$. 
In our problem, the mean function can be written as ${\mu(z) = \mathbb{E}[g(z)]}$. 
In addition, we adopt the Matérn kernel \citep{seeger2004gaussian} as the kernel function $k(z, z^\prime) = \mathbb{E}[(g(z) - \mu(z))(g(z^\prime) - \mu(z^\prime))]$, which is widely-used in BO literature. 
The main advantage of the GP as a surrogate model is that it can produce estimates of the mean evaluation and variance of a new latent variable, which can be used to model uncertainty and confidence levels for the acquisition function described in the following. 
Note that the latent variable $z$ is assumed to follow a Gaussian prior in the latent decision model, which aligns with the assumption of the GP model that the observed latent variables $Z$ follow the multivariate Gaussian distribution.

\begin{figure}[t!]
  \centering
  \begin{subfigure}[b]{0.5\linewidth}  
    \centering
    \includegraphics[width=\linewidth]{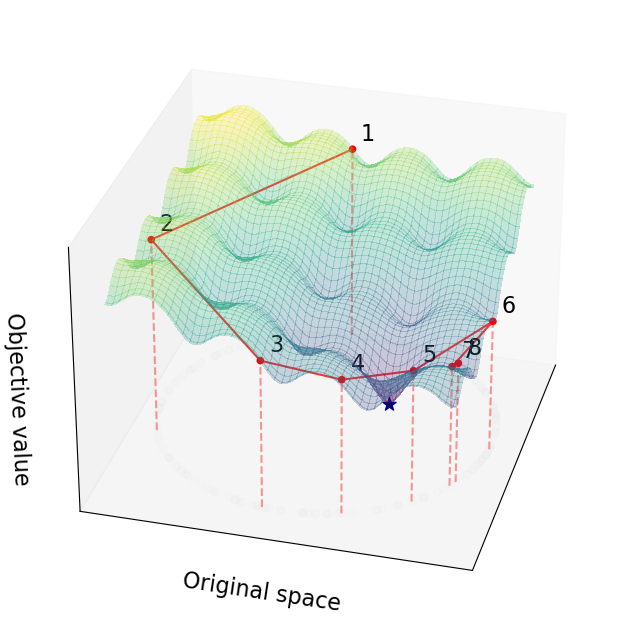}
    \caption{Original space}
  \end{subfigure}
  \hfill
  \begin{subfigure}[b]{0.48\linewidth}  
    \centering
    \includegraphics[width=\linewidth]{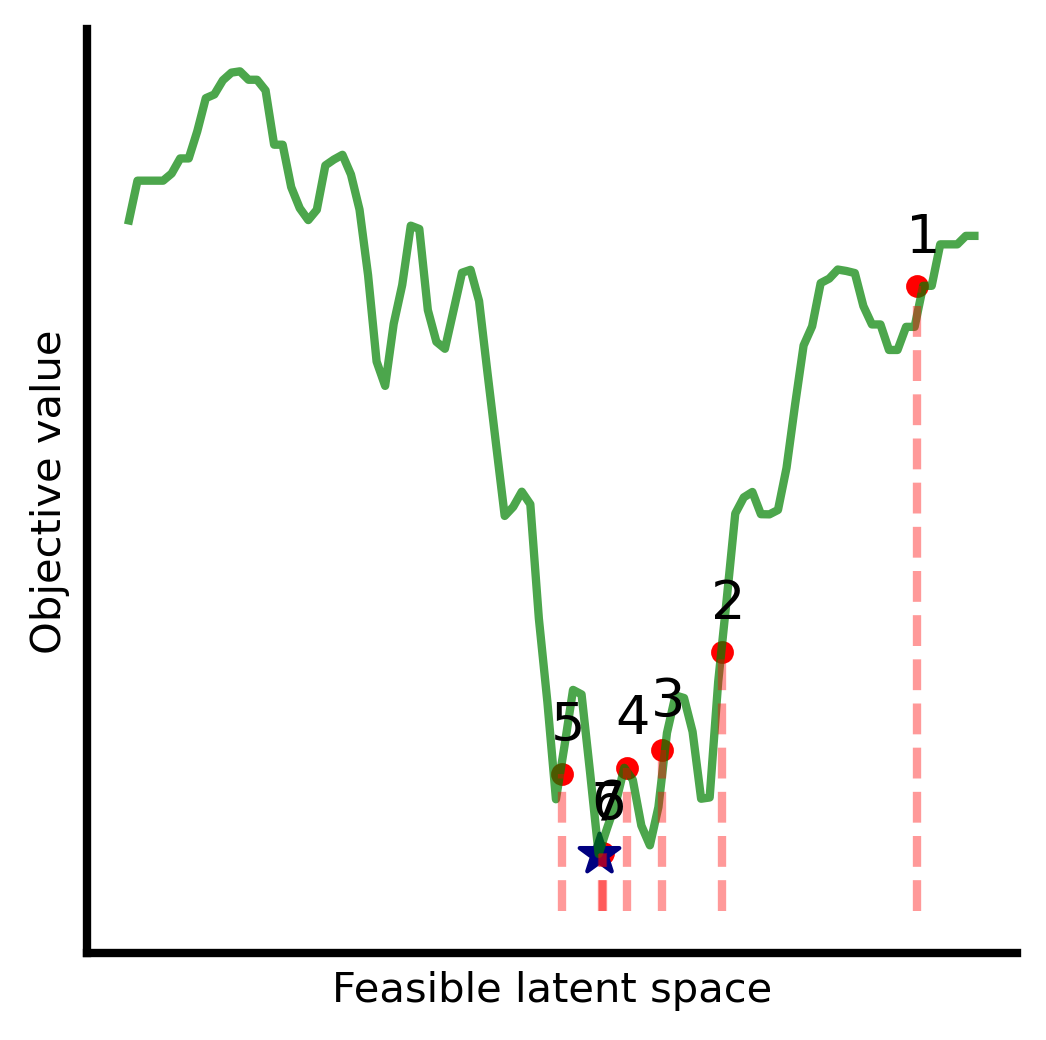}
    \caption{Feasible latent space}
  \end{subfigure}
  \caption{Solution paths of the same optimization problem suggested by our \texttt{CageBO} algorithm in the (a) original and (b) latent spaces, respectively. The objective in this illustrative example is to minimize the Ackley function subject to an "implicit" constraint (a circle) in a 2-dimensional decision space.}
  \label{fig:trajectory}
\end{figure}

\textbf{Acquisition function.} 
In the BO methods, the acquisition function is used to suggest the next evaluating candidate. 
Our approach adopts the lower confidence bound (LCB) as the acquisition function to choose the next latent variable $z$ candidate to be decoded and evaluated. 
This function contains both the mean $\mu(z)$ of the GP as the explicit exploitation term and the standard deviation $\sigma(z)$ of the GP as the exploration term:
\begin{equation}
\mathrm{LCB}(z, \text{GP}) = \mu(z) - \sqrt{\beta}\sigma(z),
\end{equation}
where $\beta$ is a trade-off parameter. 

To identify new latent decisions for evaluation, our method draws numerous independent samples from the conditional posterior \( q_{\phi}(\cdot |x, c=1) \) among observed feasible decisions $x\in \mathcal{D}_f$. The sample with the lowest LCB value, denoted as \( z_t \), is selected for decoding and subsequent evaluation using the objective function \( \hat{f} \) in the \( t \)-th iteration.

\subsection{Theoretical Analysis}\label{sec:theory}
We provide theoretical analysis for Algorithm \ref{algo_BO}. Let $\mathtt{enc}: \cX \mapsto \cZ$ denote the encoder function, $\mathtt{dec}: \cZ \mapsto \cX$ denote the decoder function, and $g(z) = f(\psi_{\mathcal{D}_f}(\mathtt{dec}(z))): \cZ \mapsto \mathbb{R}$ denotes the objective function w.r.t. $\cZ$ where $\psi_{\mathcal{D}_f}$ is the post-decoder. Note even if post-decoder is not needed, $\psi_{\mathcal{D}_f}(\mathtt{dec}(z))=\mathtt{dec}(z)$.
Following the existing work in Gaussian process bandit optimization, we utilize cumulative regret and its expected version to evaluate the performance of our algorithm which are defined as follows.
\begin{equation}
    R_T = \sum_{i=1}^T f(x_t) - f(x_*), 
\E[R_T] = \sum_{i=1}^T \E[f(x_t) - f(x_*)],
\end{equation}
where $x_*=\argmin_{x \in \cX} f(x)$ and the expectation is taken over all randomness, including random noise and random sampling over observations.

We assume the distance between any two points in $\cZ$ can be upper bounded by their distance in $\cX$, i.e., $\forall x, x' \in \cX, \|\mathtt{enc}(x) - \mathtt{enc}(x')\|_2 \leq C_p \|x - x'\|_2$. We further assume that function $g: \cZ \mapsto \mathbb{R}$ is drawn from a Gaussian Process prior and it is $C_g$-Lipschitz continuous, i.e., $\forall z, z' \in \cZ, |g(z) - g(z')| \leq C_g \|z - z'\|_2$. This assumption is the standard Gaussian process assumption for Bayesian optimization \citep{srinivas2010gaussian}. Here $C_p$ and $C_g$ are universal constants.
We use big $\tilde{O}$ notation to omit any constant and logarithmic terms.
Now we are ready to state our main theoretical result.
\begin{theorem}\label{thm:main}
After running $T$ iterations, the expected cumulative regret of Algorithm \ref{algo_BO} satisfies that
\begin{align}
\E[R_T] = \widetilde{O}(\sqrt{T \gamma_T} + \sqrt{d} (n+T)^\frac{d}{d+1}),
\end{align}
where $\gamma_T$ is the maximum information gain, depending on choice of kernel used in algorithm and $n$ is number of initial observation data points.
\end{theorem}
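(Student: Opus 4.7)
The plan is to split the cumulative regret into a latent-space Bayesian-optimization regret plus an approximation-bias term that measures how close $\mathcal{D}_f$ can come to the true optimum $x_*$. Using the identity $f(x_t)=g(z_t)$, which holds by construction since $x_t=\psi_{\mathcal{D}_f}(\mathtt{dec}(z_t))$, and letting $z_*\in\argmin_{z\in\cZ} g(z)$, decompose
\begin{equation}
R_T = \underbrace{\sum_{t=1}^T \bigl(g(z_t)-g(z_*)\bigr)}_{A_T} + \underbrace{T\bigl(g(z_*)-f(x_*)\bigr)}_{B_T}.
\end{equation}

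For $A_T$, the setting is exactly the GP-LCB regime of Srinivas et al.~(2010): $g$ is drawn from a GP prior, the observation noise is $\sigma$-sub-Gaussian, and $z_t$ is chosen to minimise the LCB built from $(Z_{t-1},Y_{t-1})$. I would pick $\beta_t$ so that the GP confidence bounds are uniformly valid over a $1/t^2$-scale discretisation of $\cZ$, using the $C_g$-Lipschitz continuity of $g$ to absorb the discretisation error through the standard refinement argument. The classical chain of steps (confidence bounds $\to$ sum of posterior standard deviations $\to$ maximum information gain via Cauchy--Schwarz) then yields $A_T=\widetilde{O}(\sqrt{T\gamma_T})$.

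For $B_T$, optimality of $z_*$ gives $g(z_*)\le g(\mathtt{enc}(x_*)) = f\bigl(\psi_{\mathcal{D}_f}(\mathtt{dec}(\mathtt{enc}(x_*)))\bigr)$. Combining the $C_g$-Lipschitzness of $g$ with the encoder bound $\|\mathtt{enc}(x)-\mathtt{enc}(x')\|_2\le C_p\|x-x'\|_2$ reduces the bias to the (constant-scaled) nearest-neighbour distance in $\cX$ from $x_*$ to $\mathcal{D}_f$. Since $|\mathcal{D}_f|$ grows from $n$ to at most $n+T$, an $\epsilon$-cover of $\cX\subseteq[0,1]^d$ consists of $O(\epsilon^{-d})$ balls; paying $O(\epsilon)$ bias per iteration and requiring every cell to contain a feasible observation, the optimal balance between the discretisation scale and the available $n+T$ samples produces an expected contribution of order $\sqrt{d}\,(n+T)^{d/(d+1)}$. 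Summing $A_T$ and $B_T$, then taking expectations over the noise, yields the stated bound.

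The hard part will be the $B_T$ step: the points added to $\mathcal{D}_f$ are generated by an LCB-driven, non-i.i.d.\ exploration process, so standard i.i.d.\ nearest-neighbour rates do not directly apply. The argument has to combine the exploration term $\sqrt{\beta_t}\sigma(z)$ in LCB with the good-coverage assumption on the initial set $\mathcal{D}$ to ensure that every cell of the chosen $\epsilon$-cover of $\cX$ eventually contains an observed feasible point with high probability; the target exponent $d/(d+1)$ then emerges from optimising $\epsilon$ against the accumulated discretisation error and the number of samples, while the $\sqrt{d}$ factor tracks the Lipschitz/covering constants that remain after the balancing.
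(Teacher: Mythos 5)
Your proposal uses the same two essential ingredients as the paper's proof---the GP-LCB regret bound of Srinivas et al.\ applied to the latent objective, and an $\varepsilon$-covering / expected nearest-neighbour argument over $[0,1]^d$ with the balance $\varepsilon=(n+t)^{-1/(d+1)}$ yielding the $\sqrt{d}\,(n+T)^{d/(d+1)}$ term---but the decomposition is genuinely different. The paper splits each iteration according to whether the decoded point is feasible (probability $C_f$): the regret of the LCB-suggested latent point $\hat z_t$ against $z_*$ is charged to GP-LCB, while the post-decoding correction $g(z_t)-g(\hat z_t)$, incurred with probability $1-C_f$, is bounded by $C_pC_g$ times the expected nearest-neighbour distance to the growing set of $n+t$ feasible points (via the expected-empty-cell lemma of Shalev-Shwartz and Ben-David). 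Crucially, the paper simply writes $\sum_t f(x_t)-f(x_*)=\sum_t g(z_t)-g(z_*)$, i.e.\ it silently assumes the latent optimum attains the true optimal value. You instead fold the post-decoded evaluation entirely into $A_T$ (legitimate, since $g$ is defined with $\psi_{\mathcal{D}_f}$ built in) and make the latent-versus-true-optimum gap explicit as $B_T=T\bigl(g(z_*)-f(x_*)\bigr)$, a term the paper never bounds because it implicitly sets it to zero; this is the main structural difference, and it buys a cleaner accounting of where the approximation error actually lives.

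Two caveats. First, your bound on $B_T$ requires a step the stated assumptions do not supply: from $g(z_*)\le g(\mathtt{enc}(x_*))=f\bigl(\psi_{\mathcal{D}_f}(\mathtt{dec}(\mathtt{enc}(x_*)))\bigr)$ you must compare $f$ at a point of $\mathcal{D}_f$ with $f(x_*)$, but $f$ is not assumed Lipschitz on $\cX$, and $f(x_*)\neq g(\mathtt{enc}(x_*))$ in general because $x_*\notin\mathcal{D}_f$ and $\psi_{\mathcal{D}_f}$ projects onto $\mathcal{D}_f$. You need either Lipschitzness of $f$ itself or an encoder--decoder round-trip consistency assumption (effectively $f\equiv g\circ\mathtt{enc}$ on the feasible set), which is the same identification the paper makes tacitly when it equates $f(x_*)$ with $g(z_*)$; state it explicitly and your argument closes at the same level of rigor as the paper's. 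Second, the difficulty you flag---that points added to $\mathcal{D}_f$ arise from the LCB-driven process and are not i.i.d., so nearest-neighbour rates with $n+t$ points are not automatic---is a real gap, but be aware the paper does not resolve it either: it applies the i.i.d.\ lemma directly to the $n+t$ accumulated points (and treats $g$ as a fixed GP sample even though $\psi_{\mathcal{D}_f}$, hence $g$, changes as $\mathcal{D}_f$ grows). Your proposed fix via the exploration term and the coverage assumption is extra machinery beyond anything in the paper's proof.
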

\begin{remark}
We present an upper bound of expected cumulative regret of our Algorithm \ref{algo_BO}. This is a no-regret algorithm since $\lim_{T \rightarrow \infty} \E[R_T]/T=0$ which means our algorithm is able to find the global optimal solution by expectation. The bound has two terms. The first term follows from GP-UCB \citep{srinivas2010gaussian} where the maximum information gain depends on the choice of kernel used in the algorithm. For linear kernel, $\gamma_T = O(d \log T)$ and for squared exponential kernel, $\gamma_T = O(\log T)^{d+1}$. The second term is the regret term incurred by the post decoder $\psi_{\mathcal{D}_f}$ when needed, which is also sublinear in $T$ if $n$ is chosen no larger than $T$. Due to page limit, full proof of Theorem \ref{thm:main} is shown in the appendix.
\end{remark}

\section{Experiments}\label{sec:exp}

\subsection{Experimental Setup}\label{sec:exp_setup}

We compare our \texttt{CageBO} algorithm against three baseline approaches that can be used to address ICBBO problems. These baselines include (1) vanilla version of Bayesian optimization (\texttt{BO}) \citep{jones1998efficient}, (2) simulated annealing (\texttt{SA}) \citep{kirkpatrick1983optimization}, (3) approximated Mixed Integer Linear Programming (\texttt{MILP}) \citep{zhu2022data}, (4) a VAE-guided Bayesian optimization (\texttt{VAE-BO}) \citep{eissman2018bayesian}, and (5) a variant of constrained VAE-guided Bayesian optimization (\texttt{CON-BO}) \citep{griffiths2020constrained}.
We train the latent decision model in our framework with the Adam optimizer \citep{king2015adam}.
Both \texttt{CageBO}, \texttt{VAE-BO}, and \texttt{CON-BO} are trained and performed under an identical environment. \texttt{MILP} is only compared in the redistricting problems because their implicit constraints can be approximated by a set of linear constraints, albeit with the trade-offs of adding auxiliary variables and incurring computational expenses.
Each method is executed 10 times across all experiments to determine the $95\%$ confidence interval of their results.

Experiments were conducted on a PC equipped with a 12-core CPU and 18 GB RAM. For synthetic experiments, \texttt{CageBO} is trained in $1,000$ epoches with the learning rate of $10^{-4}$, $\eta=0.1$, and dimension of latent space $d = 10$. Hyperparameters: $|X_0| = 10$ initial evaluation points, $\beta=1$ for the LCB, and $T=100$. 
For real-world redistricting experiments, \texttt{CageBO} is trained in $1,000$ epochs with the learning rate of $10^{-4}$, $\eta=0.1$, and the dimension of latent space $d = 25$. Hyperparameters: $|X_0| = 5$ initial evaluation points, $\beta=1$ for the LCB, and total iterations $T=100$.

\subsection{Synthetic Results}
\label{sec:synthetic}

\begin{figure}[!t]
    \centering
    \begin{subfigure}[b]{0.5\linewidth}
        \centering
        \includegraphics[width=\linewidth]{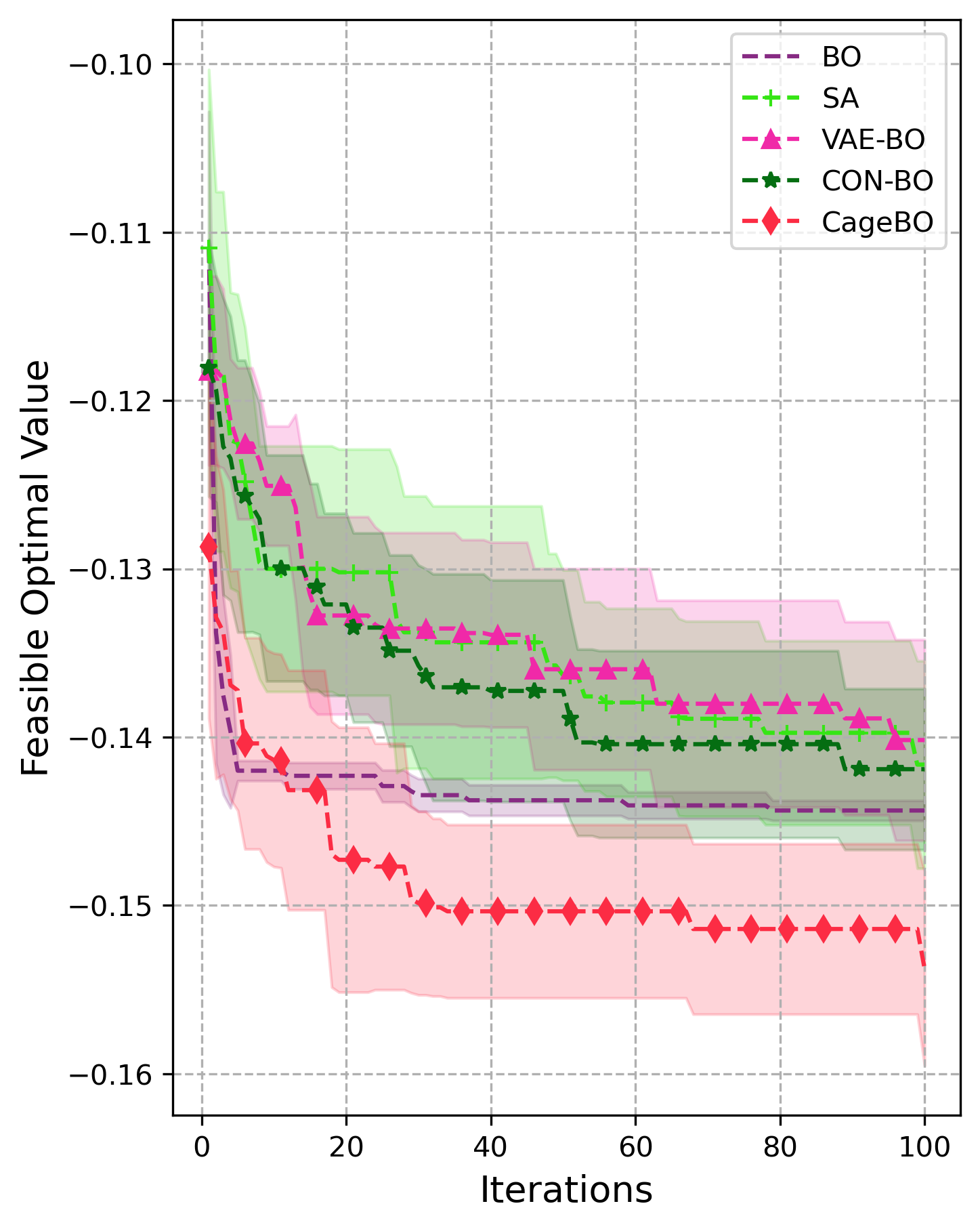}
        \caption{Keane's bump function}
        \label{fig:keane_all}
    \end{subfigure}
    \hfill
    \begin{subfigure}[b]{0.48\linewidth}
        \centering
        \includegraphics[width=\linewidth]{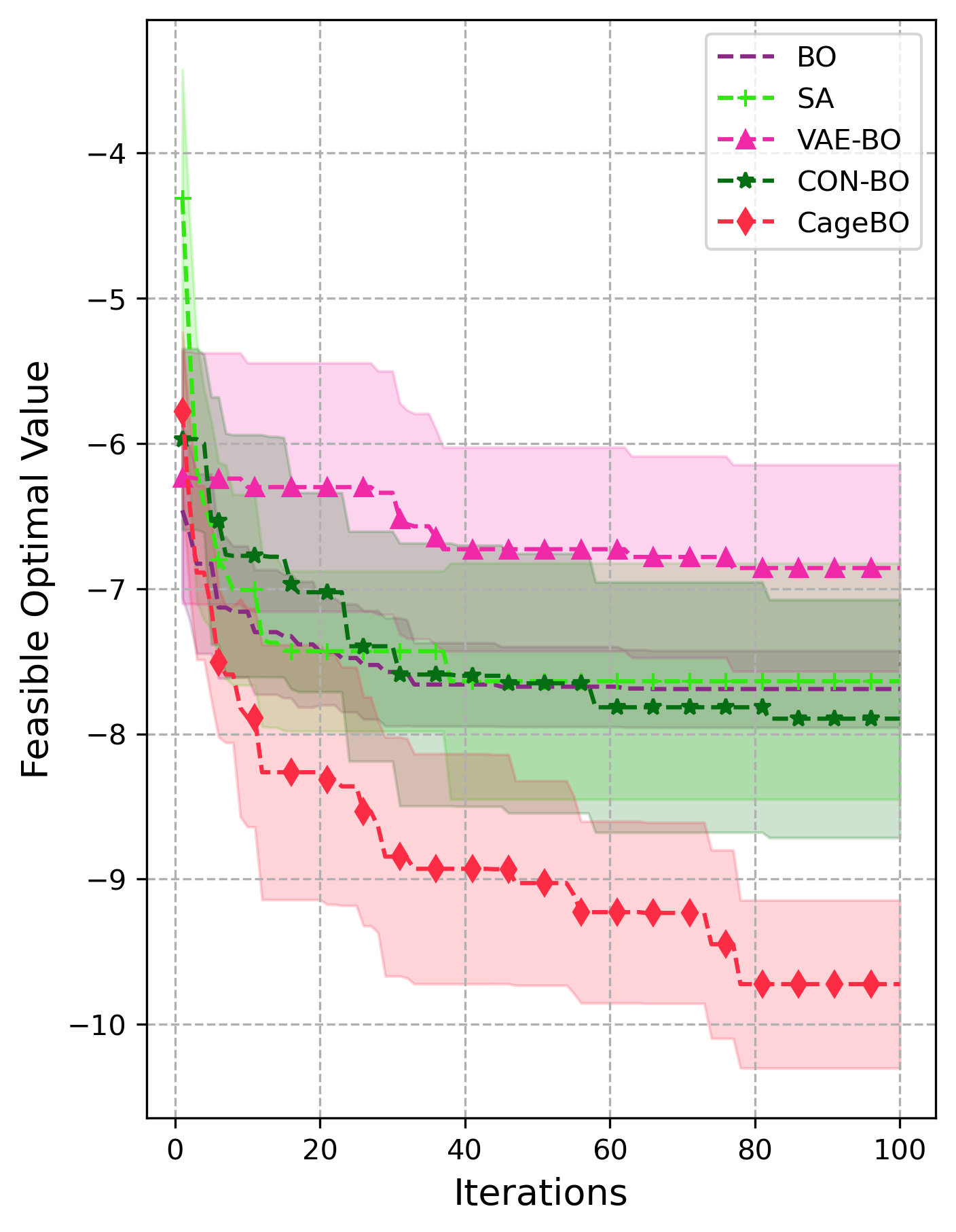}
        \caption{Michalewicz function}
        \label{fig:Michalewicz_all}
    \end{subfigure}
    \hfill
    \begin{subfigure}[b]{0.48\linewidth}
        \centering
        \includegraphics[width=\linewidth]{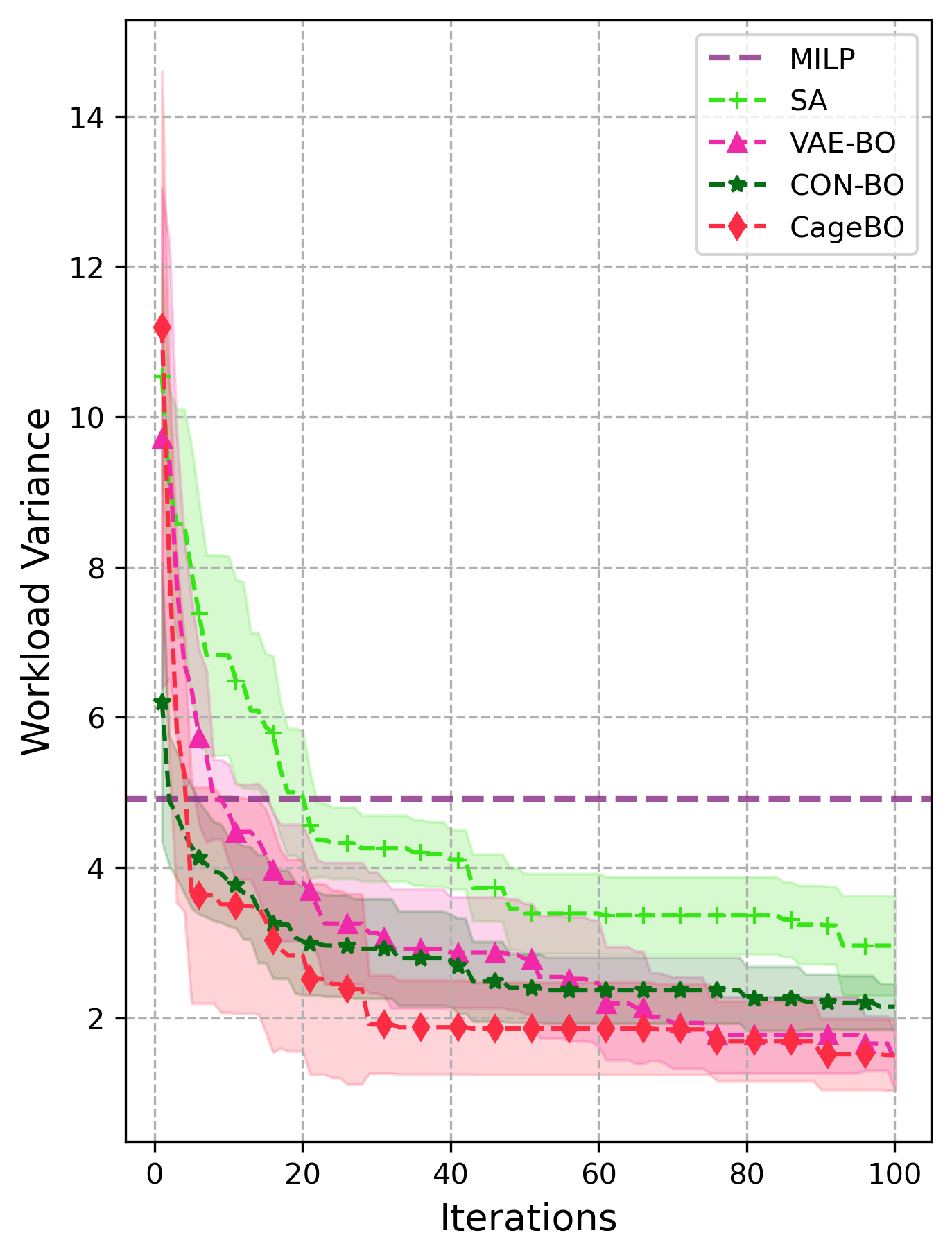}
        \caption{Synthetic $6 \times 6$ grid}
        \label{fig:synthetic_grid}
    \end{subfigure}
    \hfill
    \begin{subfigure}[b]{0.485\linewidth}
        \centering
        \includegraphics[width=\linewidth]{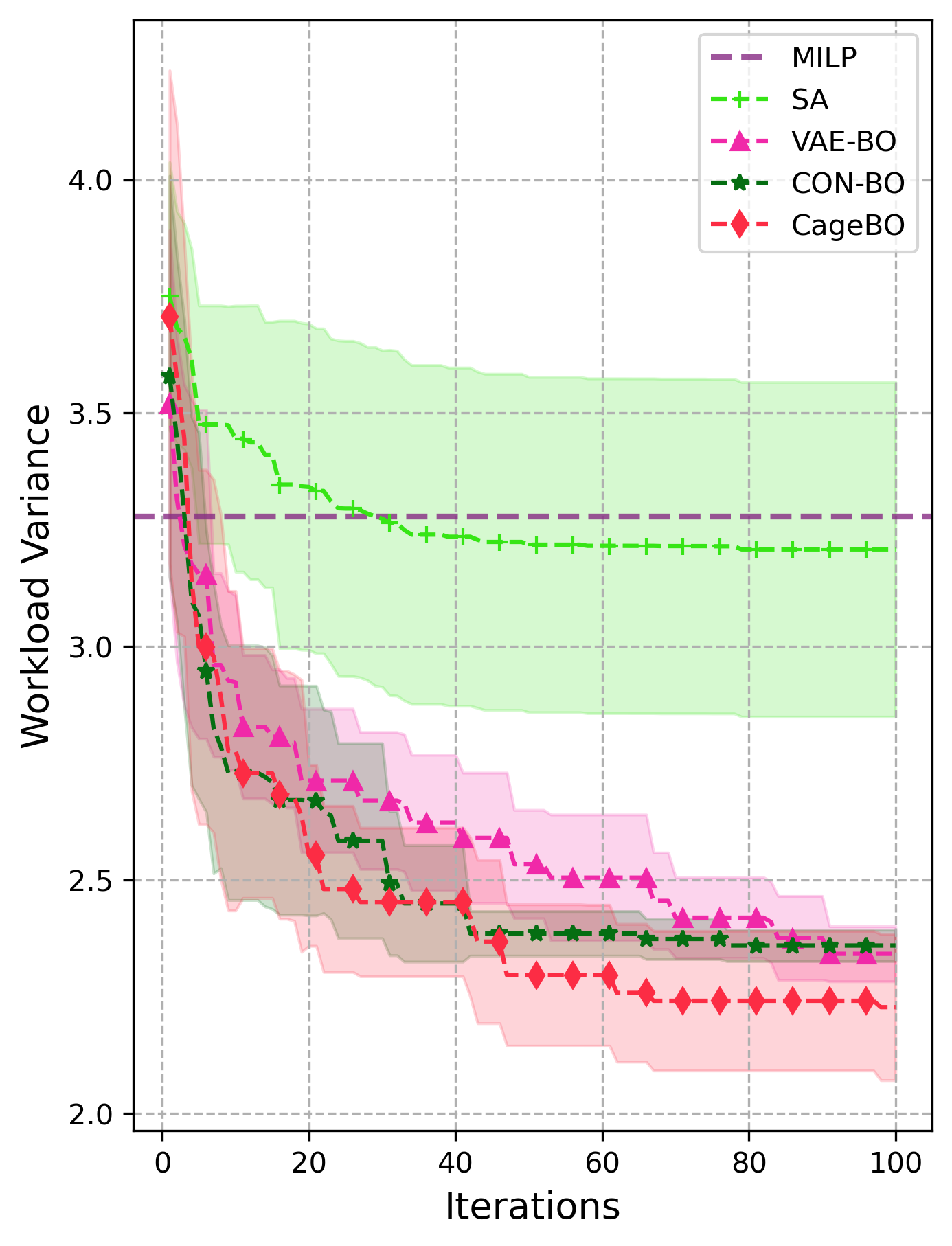}
        \caption{Atlanta}
        \label{fig:real_data}
    \end{subfigure}
    
    \caption{
    Comparisons of performance convergence with 95\% confidence intervals for (a) 30-dimensional Keane's bump function, (b) 30-dimensional Michalewicz function, (c) police redistricting problem in a synthetic $6 \times 6$ grid, and (d) police redistricting problem in Atlanta, Georgia.
    }
    \label{fig:synthetic}
\end{figure}

We consider minimizing (a) the $30$-d Keane's bump function \citep{keane1996experiences} and (b) the $30$-d Michalewicz function \citep{molga2005test}, both of which are common test functions for constrained optimization. 
To obtain samples from implicit constraints, the conundrum we aim to address via our methodology, we first generate $n=2,000$ samples from the standard uniform distribution in the $10$-dimensional latent space, then decode half of them through a randomly initialized decoder and mark those as feasible. 
We define the feasibility oracle to return 1 only if the input solution is matched with a feasible solution.
We scale the samples accordingly so that the test functions can be evaluated under their standard domains, \ie, the Keane's bump function on $[0,10]^{d}$ and the Michalewicz function on $[0, \pi]^{d}$.

Figure \ref{fig:synthetic}(a)\&(b) presents the synthetic results. It is evident that our method attains the lowest objective values consistently compared to other baseline methods. In Figure \ref{fig:synthetic} (b), in particular, we observe that the integration of the conditional generative representation model into \texttt{CageBO} greatly enhances the BO's performance. This is in stark contrast to the similar learning-based \texttt{VAE-BO} and \texttt{CON-BO}, which do not yield satisfactory outcomes.

\subsection{Case Study: Police Redistricting} 
\label{sec:casestudy}

One common application of high-dimensional ICBBO in public policymaking is redistricting. 
For example, in police redistricting problems, the goal is to distribute $L$ police service regions across $J$ distinct zones. 
Each service region is patrolled by a single police unit.
While units within the same zone can assist each other, assistance across different zones is disallowed.  
The decision variables are defined by the assignment of a region to a zone, represented by matrix $x \in \{0, 1\}^{L\times J}$.
Here, an entry $x_{lj} = 1$ indicates region $l$ is allocated to zone $j$, and $x_{lj} = 0$ otherwise. 
A primal constraint is that each region should be assigned to only one zone. This means that for every region $l$, $\sum_{j\in [J]} x_{lj} = 1$. 
The implicit constraint to consider is the contiguity constraint, which ensures that all regions within a specific zone are adjacent. We define the feasibility oracle to return 1 only if both the primal and the implicit constraint are satisfied.

The objective of the police redistricting problem is to minimize the workload variance across all the districts. The workload for each district \( j \), denoted as \( \rho_j \), is computed using the following equation: 
\begin{equation}
\rho_j = ({\tau}_j+1/\mu) \lambda_j.
\end{equation}
In this equation, \( {\tau}_j \) represents the average travel time within district \( j \), which can be estimated using the hypercube queuing model \citep{larson1974hypercube}, which can be regarded as a costly black-box function. Assuming \( \mu \) represents the uniform service rate across all districts and \( \lambda_j \) denotes the arrival rate in district \( j \). Essentially, \( \rho_j \) quantifies the cumulative working duration per unit time for police units in district \( j \). For clarity, a value of \( \rho_j = 10 \) implies that the combined working time of all police units in district \( j \) counts to 10 hours or minutes in every given hour or minute.

\begin{figure}[]
    \centering
    \begin{subfigure}[b]{0.24\linewidth}
        \centering
        \includegraphics[width=\linewidth]{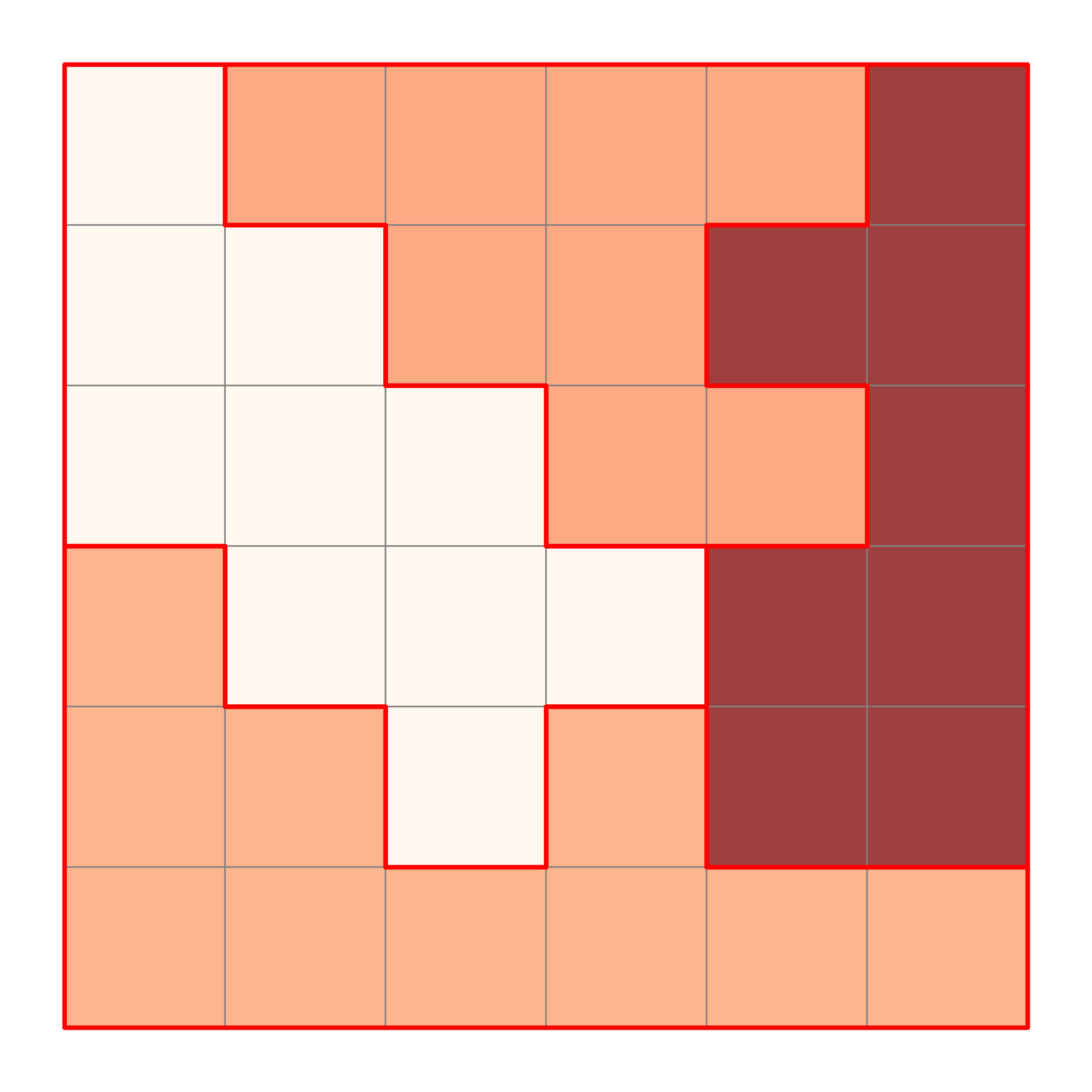}
        \caption{\texttt{MILP}}
        \label{fig:synthetic_workload_MIP}
    \end{subfigure}
    \hfill
    \begin{subfigure}[b]{0.24\linewidth}
        \centering
        \includegraphics[width=\linewidth]{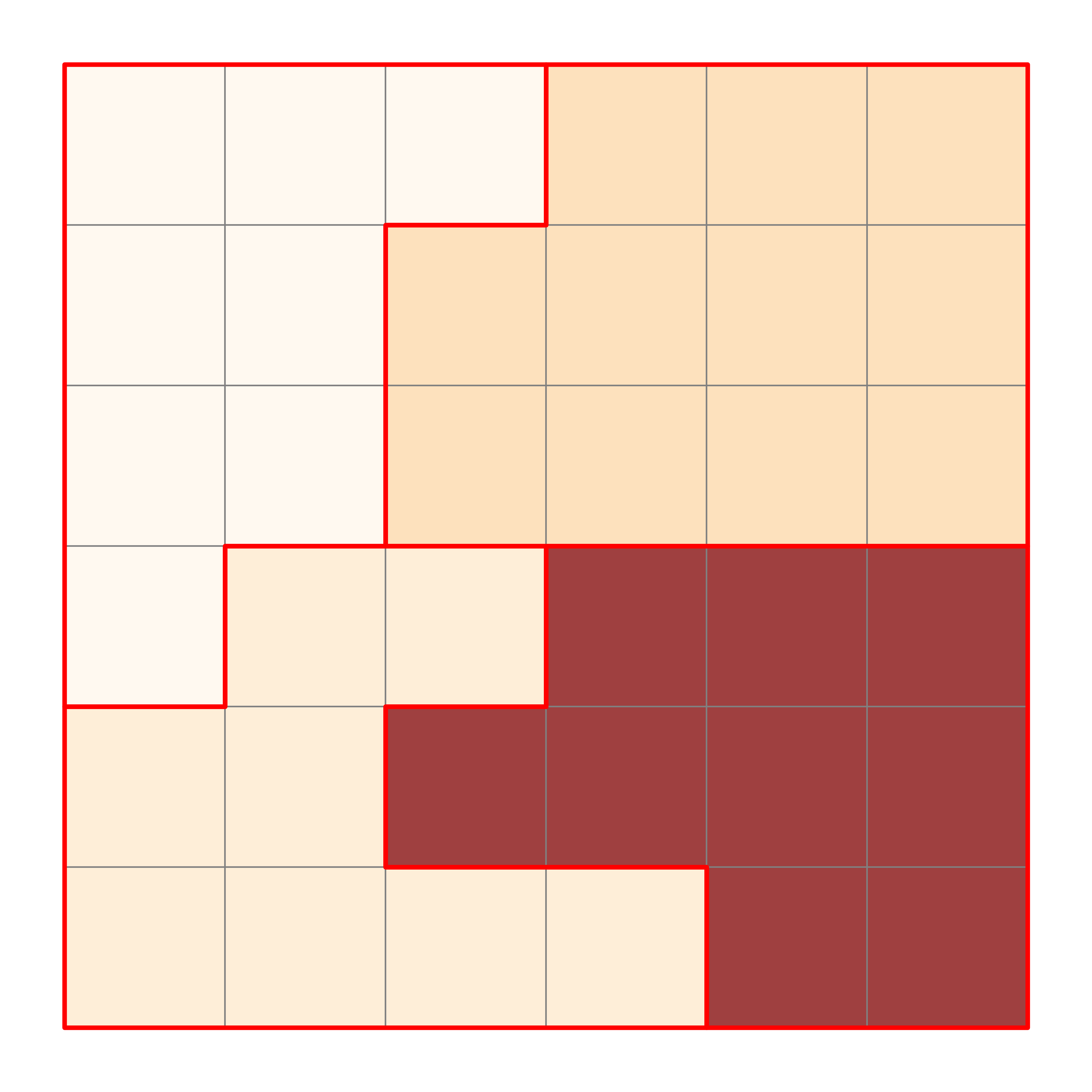}
        \caption{\texttt{SA}}
        \label{fig:synthetic_workload_SA}
    \end{subfigure}
    \hfill
    \begin{subfigure}[b]{0.24\linewidth}
        \centering
        \includegraphics[width=\linewidth]{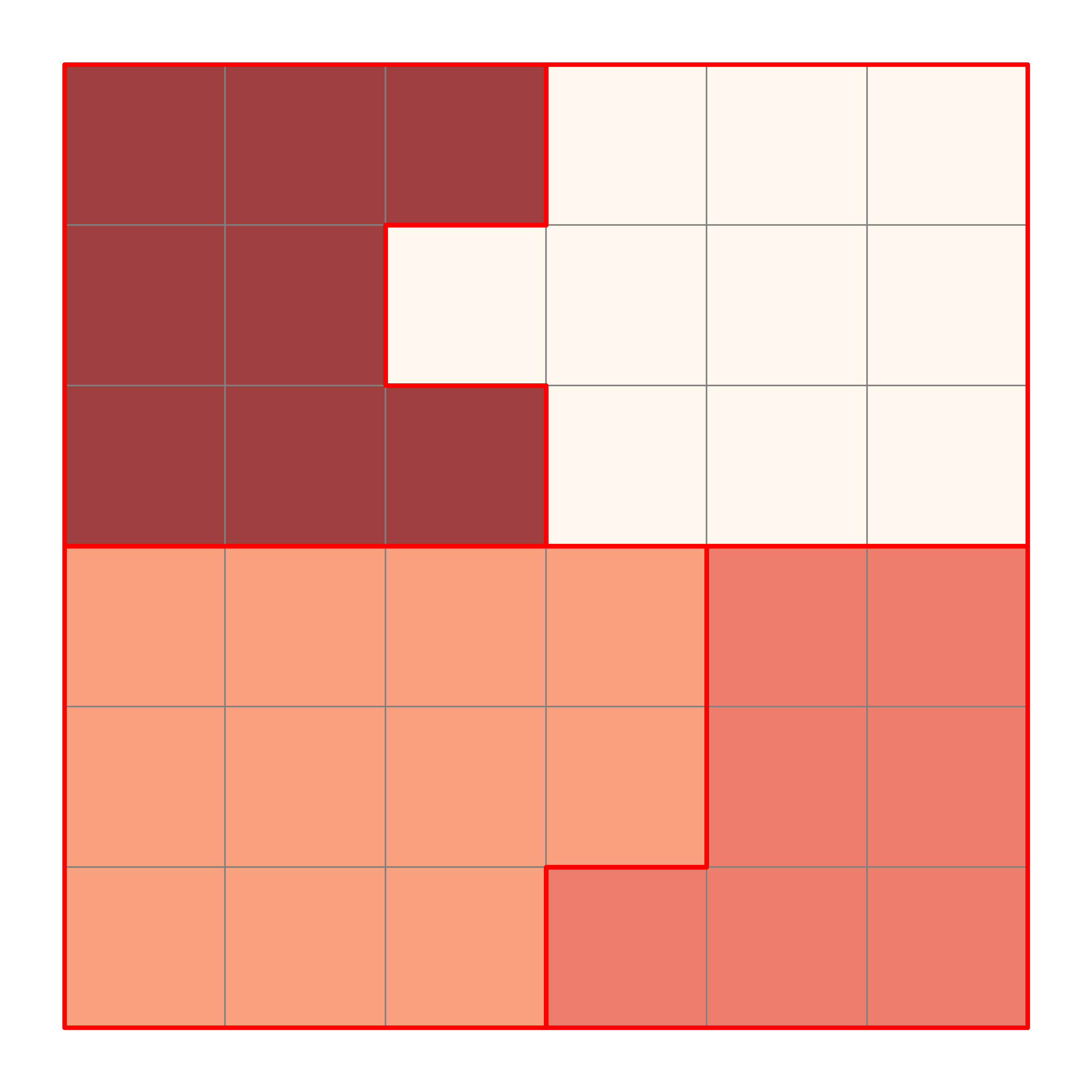}
        \caption{\texttt{VAE-BO}}
        \label{fig:synthetic_workload_GP_LCB}
    \end{subfigure}
    \hfill
    \begin{subfigure}[b]{0.24\linewidth}
        \centering
        \includegraphics[width=\linewidth]{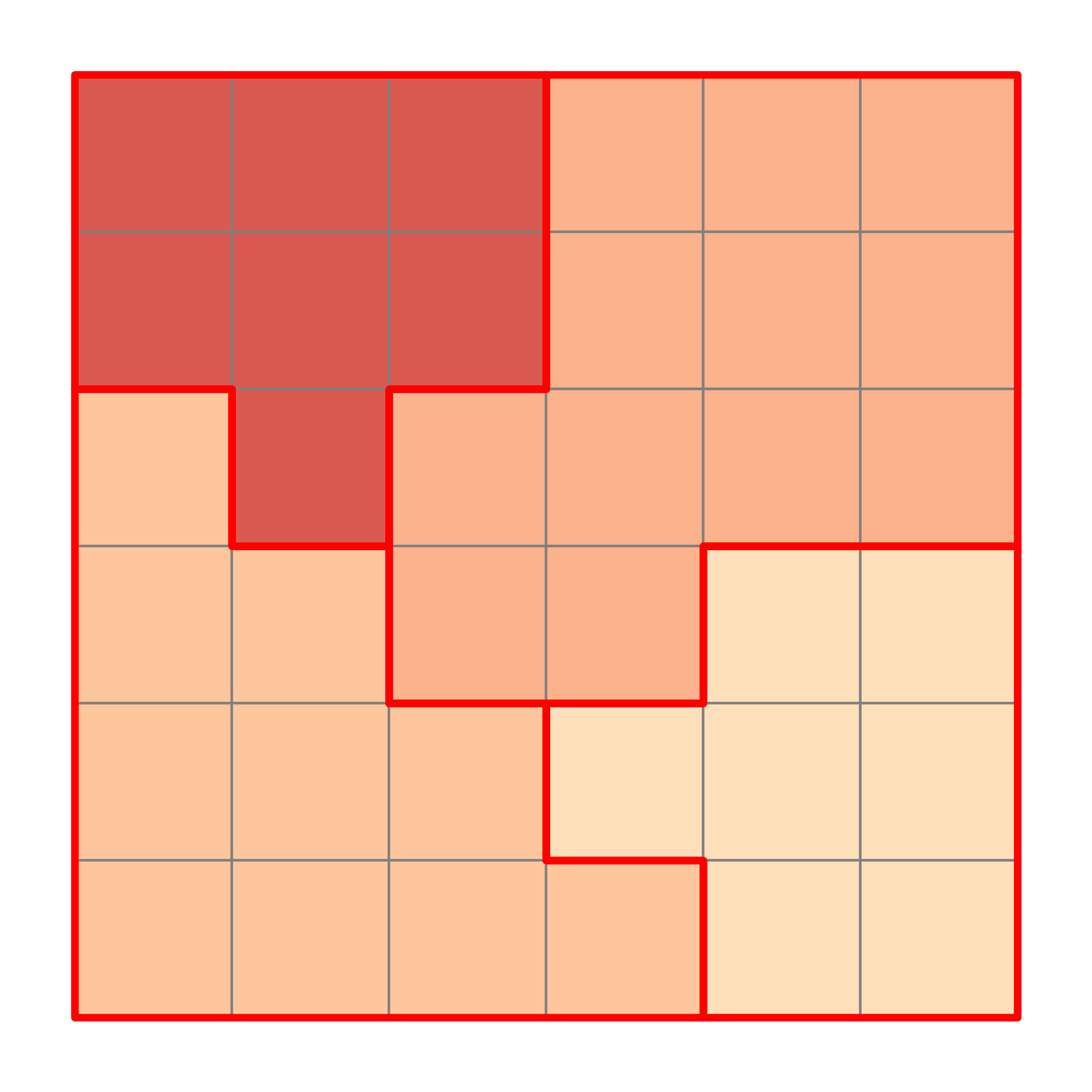}
        \caption{\texttt{CageBO}}
        \label{fig:synthetic_workload_LSBO}
    \end{subfigure}

    \vspace{0.2cm}  

    \begin{subfigure}[b]{0.24\linewidth}
        \centering
        \includegraphics[width=\linewidth]{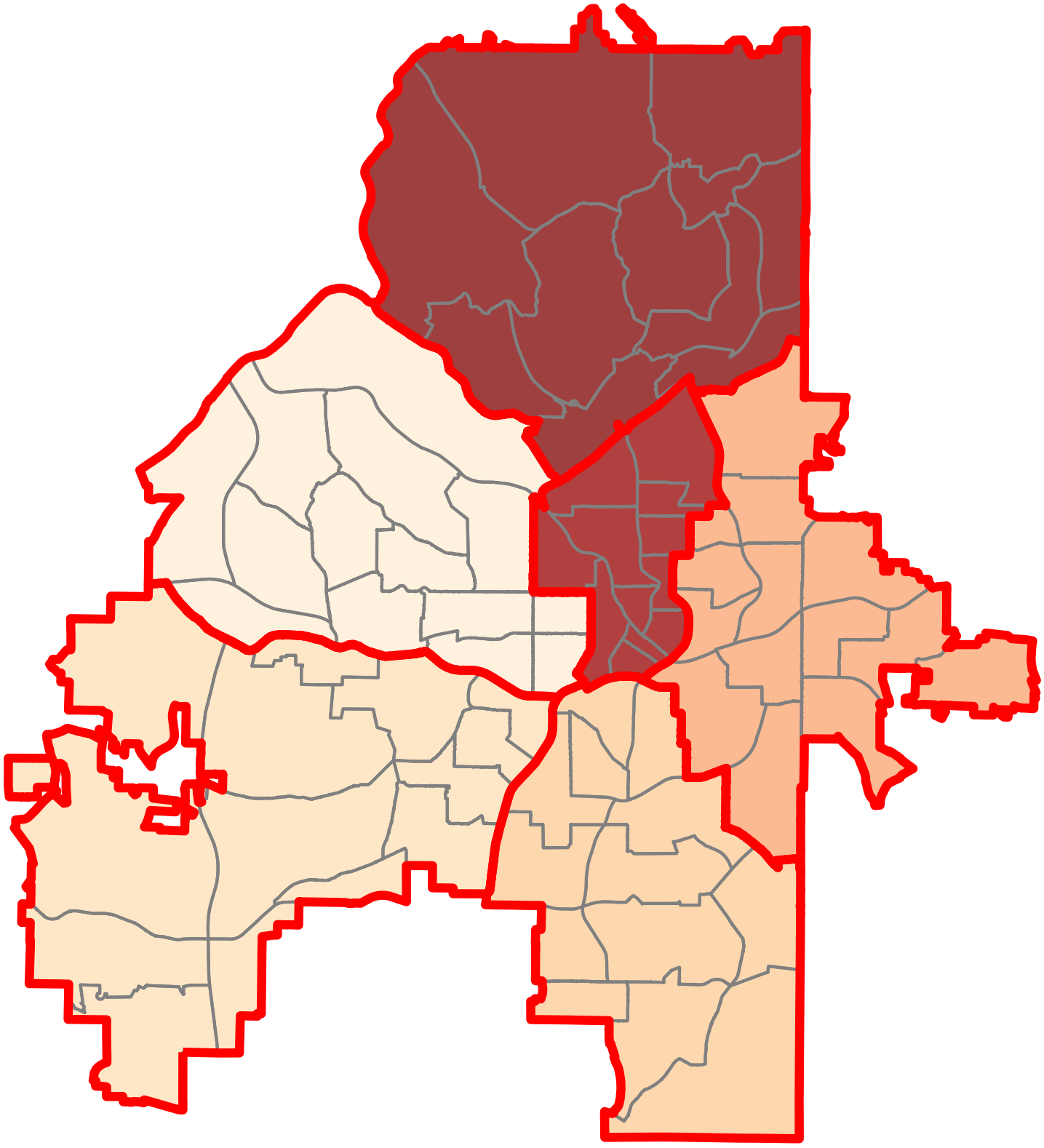}
        \caption{\texttt{MILP}}
        \label{fig:workload_0}
    \end{subfigure}
    \hfill
    \begin{subfigure}[b]{0.24\linewidth}
        \centering
        \includegraphics[width=\linewidth]{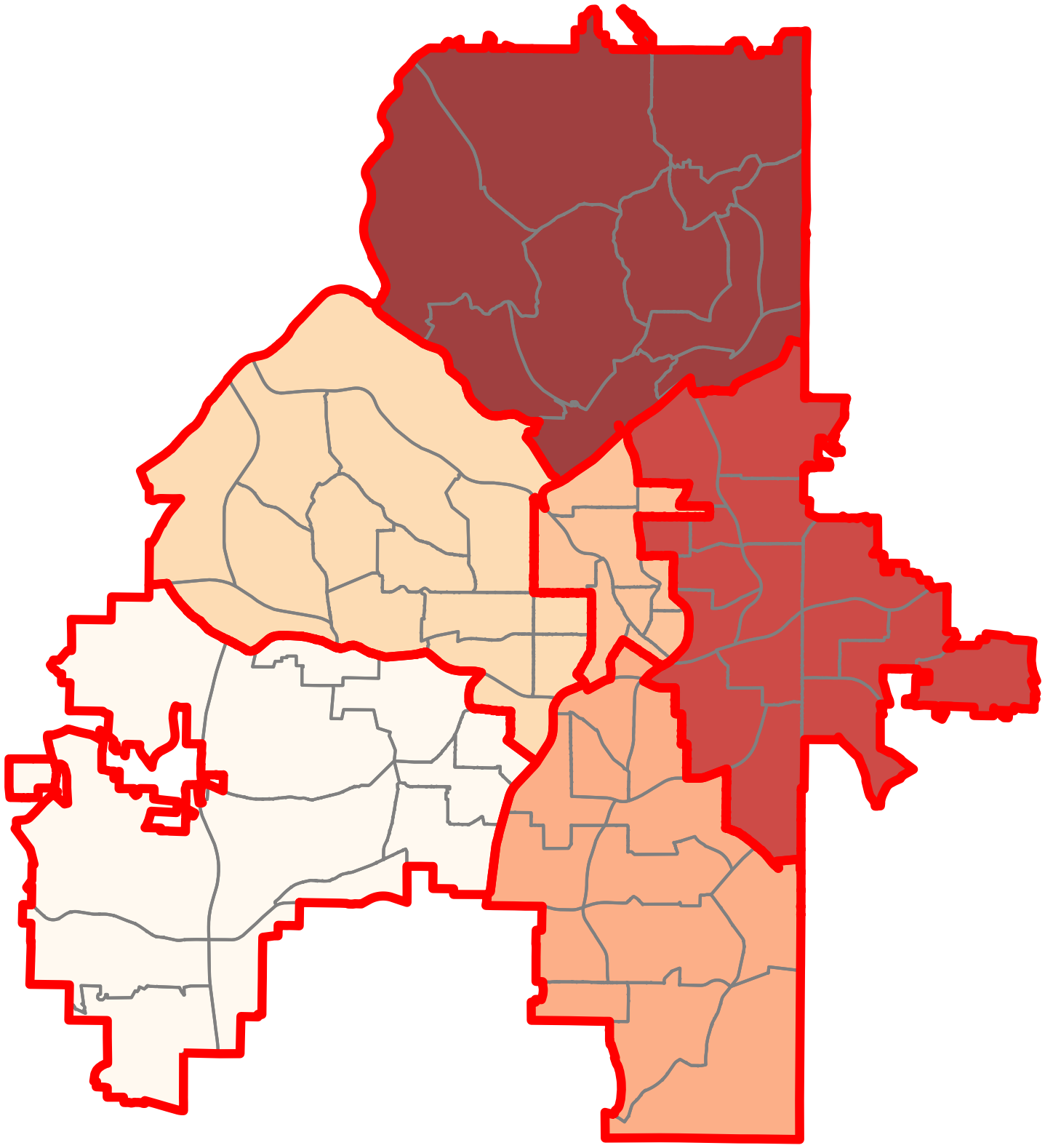}
        \caption{\texttt{SA}}
        \label{fig:workload_1}
    \end{subfigure}
    \hfill
    \begin{subfigure}[b]{0.24\linewidth}
        \centering
        \includegraphics[width=\linewidth]{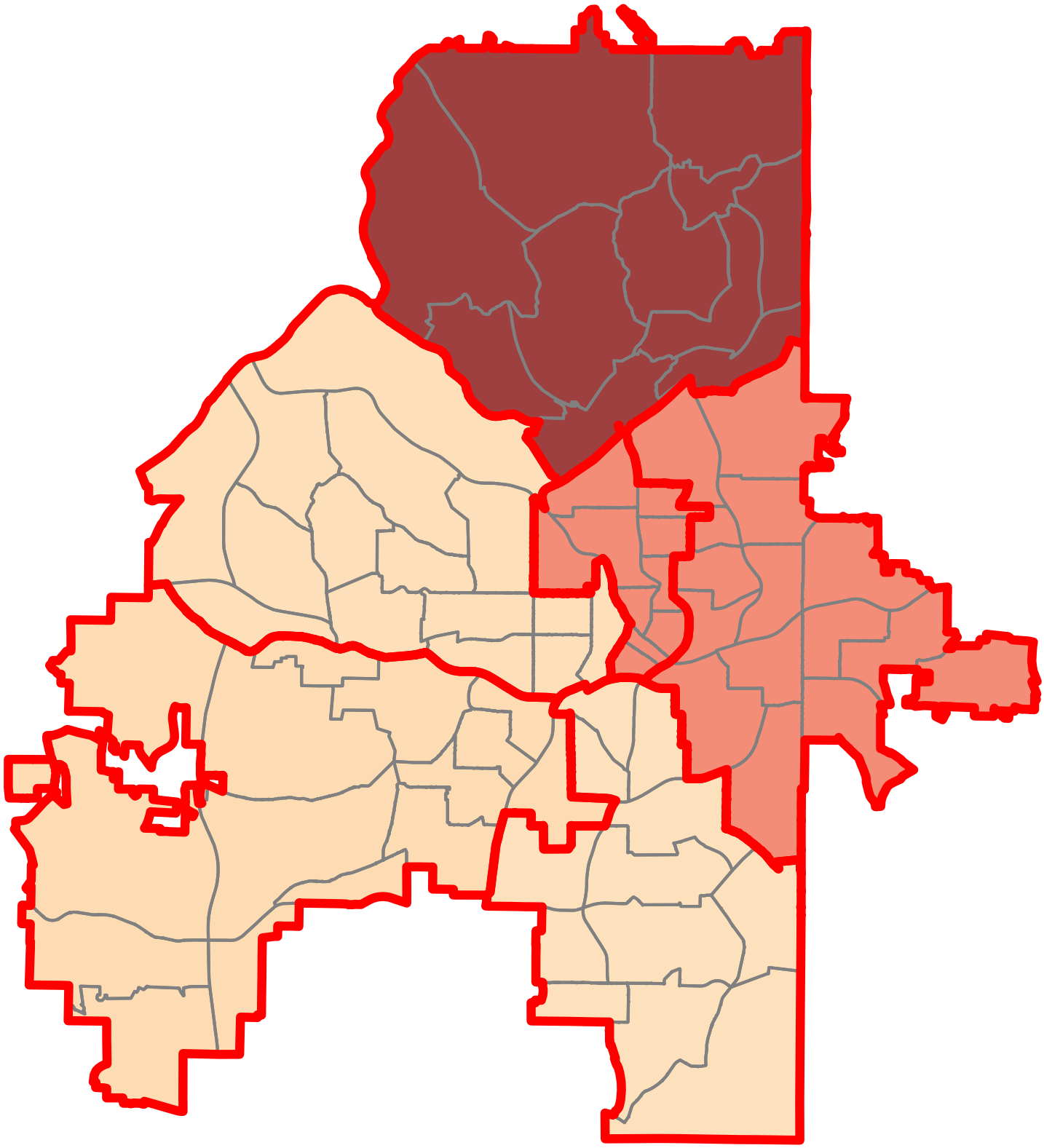}
        \caption{\texttt{VAE-BO}}
        \label{fig:workload_2}
    \end{subfigure}
    \hfill
    \begin{subfigure}[b]{0.24\linewidth}
        \centering
        \includegraphics[width=\linewidth]{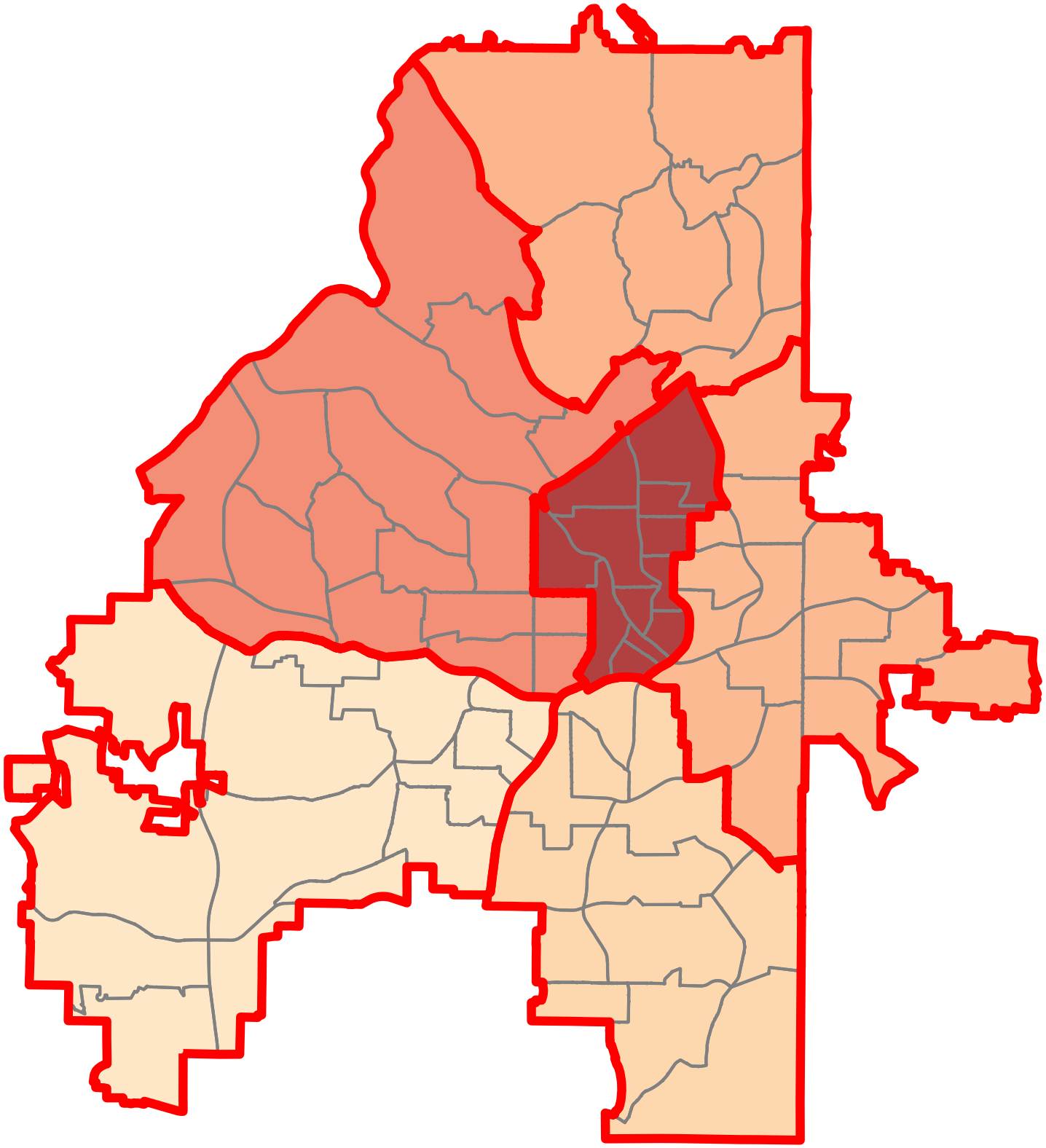}
        \caption{\texttt{CageBO}}
        \label{fig:workload_3}
    \end{subfigure}
    
    \vspace{0.2cm}  
    
    \begin{subfigure}[b]{0.8\linewidth}
    \centering
    \includegraphics[width=\linewidth]{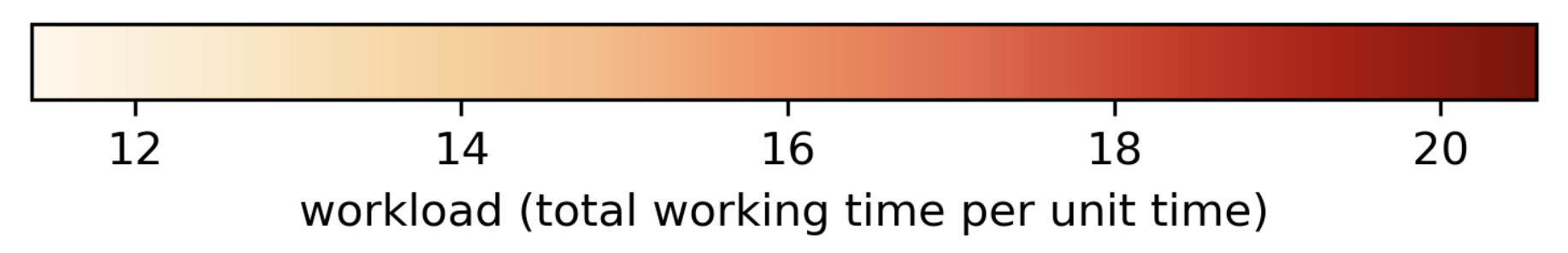}
    \label{fig:workload_colorbar}
    \end{subfigure}

    \vspace{-.4cm}
    
    \caption{
    Districting plans for the police redistricting problem in a synthetic $6 \times 6$ grid and a real-world scenario in Atlanta, Georgia. The workload in each region is indicated by the depth of the color. The \texttt{CageBO} plan achieves the lowest workload variance among all districts, indicating an optimal distribution of workload.
    }
    \label{fig:synthetic_benchmark}
\end{figure}

\textbf{Redistricting $6 \times 6$ grid.}
We first evaluate our algorithm using a synthetic scenario, consisting of a \(6 \times 6\) grid to be divided into 4 zones, and the decision variable $x \in \{0, 1\}^{36\times 4}$. For each region $l$, the arrival rate $\lambda_l$ is independently drawn from a standard uniform distribution. Moreover, all regions share an identical service rate of $\mu = 1$. We generate an initial data set $\mathcal{D}$ by simulating $n = 10,000$ labeled decisions.

Figure \ref{fig:synthetic}(c) demonstrates that our method notably surpasses other baseline methods regarding objective value and convergence speed. Additionally, in Figure \ref{fig:synthetic_benchmark}, we show the optimal districting plans derived from our algorithm alongside those from the baseline methods. It is evident that our approach produces a more balanced plan compared to the other methods.

\textbf{Atlanta police redistricting.}
For the police zone design in Atlanta, there are 78 police service regions that need to be divided into 6 zones, with the decision variable represented as $x \in \{0, 1\}^{78\times 6}$. 
The redistricting of Atlanta's police zones is constrained by several potential implicit factors. These include contiguity and compactness, as well as other practical constraints that cannot be explicitly defined.
One such implicit constraint is the need for changes to the existing plan should be taken in certain local areas. A drastic design change is undesirable because: (1) A large-scale operational change will result in high implementation costs. (2) A radical design change will usually face significant uncertainties and unpredictable risks in future operations. 
The arrival rate $\lambda_l$ and the service rate $\mu$ are estimated using historical 911-calls-for-service data collected in the years 2021 and 2022 \citep{zhu2019crime, zhu2022spatiotemporal, zhu2022data}. 
We generate an initial data set $\mathcal{D}$ by simulating $n = 10,000$ labeled decisions which are the neighbors of the existing plan with changes in certain local areas.

Figure \ref{fig:synthetic}(d) displays the convergence of our algorithm in comparison to other baseline methods, showing a consistent reduction in workload variance. 
{Notably, the redistricting plan created by our \texttt{CageBO} algorithm most closely aligns with the actual heuristic policy applied by policymakers, wherein the zone with the highest arrival rate is assigned the largest workload, and the workloads of other zones are well-balanced. Further discussion on fair districting can be found in the appendix.}
As shown in Figure \ref{fig:synthetic_benchmark}, the plan produced by our \texttt{CageBO} algorithm achieves the lowest workload variance, surpassing the baseline algorithms. The superior performance of our \texttt{CageBO} algorithm highlights its capacity to offer critical managerial insights to policymakers, thereby facilitating informed decision-making in real-world applications.

\section{Conclusion}

In this paper, we introduce a new category of optimization problems in public policy, termed Implicit-Constrained Black-Box Optimization (ICBBO), where constraints are implicit but can be easily verified. This problem is well motivated by real-world police redistricting problems where districting plans are constrained to socio-economic or political considerations but feasible plans can be easily verified by policymakers. This new ICBBO problem poses more challenges than conventional black-box optimization where existing work cannot be readily applied. 

To address this problem, we develop the \texttt{CageBO} algorithm and its core idea is to learn a conditional generative representation of feasible decisions to effectively overcome the complications posed by implicit constraints. In this way, our method can not only learn a good representation of the original space but also generate feasible samples that can be used later. 
Our theoretical analysis confirms that \texttt{CageBO} can, on expectation, identify the global optimal solution. 
This approach was empirically validated through a case study on police redistricting in Atlanta, Georgia, where our algorithm demonstrated superior performance in optimizing districting plans. 

\section*{Acknowledgments}
This work was partially done when CL was at the University of Chicago. This work was partially supported by the UAlbany Computer Science Department startup funding. The authors would like to thank the AAAI-AISI reviewers and area chair for helpful comments.

\bibliography{reference}

 \newpage
 \appendix

 \section{Appendix}
 \subsection{Details of Theoretical Analysis}\label{app:theory}

 In this section, we restate our main theorem and show its complete proof afterward.
 In this section, we restate our main theorem and provide its complete proof. We begin by formally stating the two key assumptions that will be utilized. The first assumption pertains to the properties of the encoder.
 \begin{assumption}\label{asp:coder}
 The distance between any two points in $\cZ$ can be upper bounded by their distance in $\cX$, i.e.,
 \begin{align}
 \|\mathtt{enc}(x) - \mathtt{enc}(x')\|_2 \leq C_p \|x - x'\|_2, \quad \forall x, x' \in \cX.
 \end{align}
 where $C_p$ is a universal constant.
 \end{assumption}
 The next assumption is the standard Gaussian process assumption for Bayesian optimization \citep{srinivas2010gaussian}.
 \begin{assumption}\label{asp:gp}
 Function $g: \cZ \mapsto \mathbb{R}$ is drawn from some Gaussian Process and it is $C_g$-Lipschitz continuous, i.e.,
 \begin{align}
 |g(z) - g(z')| \leq C_g \|z - z'\|_2, \quad \forall z, z' \in \cZ,
 \end{align}
 where $C_g$ is a universal constant.
 \end{assumption}

 \begin{theorem*}[Restatement of Theorem \ref{thm:main}]
 After running $T$ iterations, the expected cumulative regret of Algorithm \ref{algo_BO} satisfies that
 \begin{align}
 \E[R_T] = \widetilde{O}(\sqrt{T \gamma_T} + \sqrt{d} (n+T)^\frac{d}{d+1}),
 \end{align}
 where $\gamma_T$ is the maximum information gain, depending on the choice of kernel used in the algorithm and $n$ is the number of initial observation data points.
 \end{theorem*}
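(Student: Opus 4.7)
The strategy is to decompose the cumulative regret into a standard Gaussian-process bandit regret in the latent space and an approximation-gap term induced by the post-decoder. Writing $y_t = f(x_t) = g(z_t)$ since each accepted observation satisfies $x_t = \psi_{\mathcal{D}_f}(\mathtt{dec}(z_t))$, and letting $z_\circ = \argmin_{z \in \cZ} g(z)$ denote the in-latent optimum, I would split
\begin{align*}
R_T = \sum_{t=1}^T \bigl( g(z_t) - g(z_\circ) \bigr) + T \bigl( g(z_\circ) - f(x_*) \bigr).
\end{align*}

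For the first sum, since $g$ is drawn from a GP (Assumption \ref{asp:gp}) and the algorithm picks $z_t$ by minimizing the LCB over the latent feasible support, I would transport the Srinivas et al.\ (2010) analysis: pick a sequence $\beta_t$ yielding simultaneous confidence intervals over a fine discretization of $\mathrm{supp}(q_\phi(\cdot\mid c=1))$, telescope the per-round LCB gap against $g(z_\circ)$ into the posterior standard deviation $\sigma_{t-1}(z_t)$, and convert $\sum_t \sigma_{t-1}^2(z_t)$ into the maximum information gain $\gamma_T$ via the mutual-information identity. Taking expectations over the (logarithmically small) failure event of the confidence bound then yields the $\widetilde{O}(\sqrt{T\gamma_T})$ contribution.

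For the second term, I would combine the $C_g$-Lipschitz assumption on $g$ with the encoder Lipschitz bound (Assumption \ref{asp:coder}) to write $g(z_\circ) \leq g(\mathtt{enc}(x_*)) = f\bigl(\psi_{\mathcal{D}_f}(\mathtt{dec}(\mathtt{enc}(x_*)))\bigr)$, and then bound the residual by a Lipschitz constant times $\min_{x \in \mathcal{D}_f}\|x - x_*\|_2$, i.e.\ the covering radius of $\mathcal{D}_f$ at $x_*$. Because $\mathcal{D}_f$ grows from $n$ initial feasible samples by at most one per iteration, a standard packing argument on $[0,1]^d$ gives a per-iteration contribution of order $(n+t)^{-1/(d+1)}$; summing $t = 1,\ldots,T$ yields $\sqrt{d}\,(n+T)^{d/(d+1)}$, where the $\sqrt{d}$ factor originates from the diameter of the $\epsilon$-net in $d$ dimensions.

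The main obstacle I anticipate is that $g$ is formally \emph{time-varying}: the post-decoder $\psi_{\mathcal{D}_{f,t}}$ depends on the growing $\mathcal{D}_{f,t}$, so the surrogate target shifts each round. I would resolve this with a monotonicity argument---adding feasible observations can only shrink the projection error, so $g_t(z) \geq g_T(z)$ pointwise and the regret against the \emph{final} $g_T$ dominates the realized regret, letting the GP analysis be applied to a fixed target. A secondary subtlety is extending the GP-LCB bound from a finite decision set to the continuous latent support, which I would handle via the usual $\epsilon$-net / Lipschitz discretization trick from the GP-UCB proof. Combining the two controlled pieces then gives the stated $\widetilde{O}(\sqrt{T\gamma_T} + \sqrt{d}(n+T)^{d/(d+1)})$ bound.
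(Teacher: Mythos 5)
Your overall strategy---a GP-LCB regret term of order $\sqrt{T\gamma_T}$ in the latent space (Lemma \ref{lem:gpucb}) plus a Lipschitz-and-covering argument charging the post-decoder an error of order $(n+t)^{-1/(d+1)}$ per round---uses the same ingredients as the paper, but your particular decomposition hides two genuine gaps. First, your fix for the time-varying target, the claimed monotonicity $g_t(z)\ge g_T(z)$, is false in general: enlarging $\mathcal{D}_f$ moves the projection $\psi_{\mathcal{D}_f}(\mathtt{dec}(z))$ to a \emph{nearer} feasible point, but $f$ at that nearer point can be \emph{larger}, so adding feasible observations can increase $g$ pointwise; nothing in Assumptions \ref{asp:coder}--\ref{asp:gp} gives the sign control you need. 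The paper does not rely on any such monotonicity: it treats $g$ as a fixed GP sample, writes $\E[R_T]$ as the GP-LCB regret of the suggested latent points $\hat z_t$ plus, with probability $1-C_f$, the per-round displacement $g(z_t)-g(\hat z_t)$ caused by post-decoding, and bounds that displacement by $C_g C_p$ times the distance from the decoded iterate to its nearest observed feasible point.

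Second, you attach the covering argument to the fixed optimizer $x_*$: you bound $g(z_\circ)-f(x_*)$ by a Lipschitz constant times $\min_{x\in\mathcal{D}_f}\|x-x_*\|_2$ and assert this shrinks like $(n+t)^{-1/(d+1)}$. The nearest-neighbor rate actually available (Lemma \ref{lem:1nn}, via the $(1/\varepsilon)^d$-box discretization) controls the \emph{expected} distance from a query drawn from the sampling distribution to its nearest sample; for a fixed point $x_*$ it gives nothing unless the feasible set provably densifies near $x_*$, and the algorithm only adds decoded iterates to $\mathcal{D}_f$, which need not approach $x_*$. Even granting a covering radius $n^{-1/(d+1)}$ from the initial data alone, your second term is $T\, n^{-1/(d+1)}$, which is not $\widetilde{O}(\sqrt{d}\,(n+T)^{d/(d+1)})$ when $n\ll T$. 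The paper avoids both problems because its error term is evaluated at the current iterate against the current feasible set of size $n+t$, so each round contributes $O(\sqrt{d}\,(n+t)^{-1/(d+1)})$ and the sum gives $\sqrt{d}\,(n+T)^{d/(d+1)}$. To repair your route you would either have to switch to this per-iterate decomposition or add an assumption that the feasible observations cover a neighborhood of $x_*$ at the stated rate; a further minor point is that $g(z_\circ)\le g(\mathtt{enc}(x_*))$ needs $\mathtt{enc}(x_*)$ to lie in the searched latent region, and identifying $g(\mathtt{enc}(x'))$ with $f(x')$ for $x'\in\mathcal{D}_f$ silently assumes decode-then-project reproduces $x'$.
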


 \begin{proof}
 Our proof starts from the bounding the error term incurred in the post-decoder. Let $S$ denote the set of $n$ data points sampled {\it i.i.d.} from domain $\cX$ and $\epsilon$ denote the expected distance between any data point $x$ and its nearest neighbor $x'$ in $\cX$, \ie,
 \begin{align}
 \epsilon = \E_{x, \cX}[\|x - x'\|_2].
 \end{align}

 Recall that $\cX \subseteq [0, 1]^d$ and we discretize it in each dimension using distance $\varepsilon$ and we get $r=(1/\varepsilon)^d$ small boxes. Each small box $C_i \forall i \in 1,...,r$ is a covering set of the domain. For any two data points in the same box, we have $\|x_1 - x_2\|\leq \sqrt{d}\varepsilon$, otherwise, $\|x_1-x_2\|\leq \sqrt{d}$. Therefore,
 \begin{align}
 \epsilon \leq \E_{S} \left[\P\left[ \bigcup_{i:C_i \cap S = \emptyset} C_i \right]\sqrt{d} + \P \left[\bigcup_{i:C_i \cap S \neq \emptyset} C_i \right] \varepsilon \sqrt{d} \right].
 \end{align}

 By Lemma \ref{lem:1nn} and $\P[\cup_{i:C_i \cap S \neq \emptyset} C_i] \leq 1$ we have
 \begin{align}
 \epsilon &\leq \sqrt{d}\left(\frac{r}{ne} + \varepsilon \right)\\
 &= \sqrt{d}\left(\frac{(1/\varepsilon)^d}{ne} + \varepsilon \right)\\
 &\leq 2\sqrt{d} n^{-\frac{1}{d+1}},\label{eq:epsilon}
 \end{align}
 where the last step is by choosing $\varepsilon = n^{-1/(d+1)}$.

 Next, we try to upper bound the expected cumulative regret. Let $C_f$ denote the constant probability that a data point $x_t$ is feasible, i.e., $\P[h(x_t)=1]=C_f$, which means with probability $1-C_f$, data point $x_t$ needs to be post-decoded. By definition of cumulative regret,
 \begin{align}
 R_T & = \sum_{i=1}^T f(x_t) - f(x_*) = \sum_{i=1}^T g(z_t) - g(z_*).
 \end{align}

 Here are two events: with probability $C_f$, $\hat{z}_t$ suggested by GP-LCB is feasible and post-decoder is not needed, and with probability $1-C_f$, $\hat{z}_t$ needs to be post-decoded. Thus,
 \begin{align}
 \E[R_T] &= C_f \E\left[\sum_{i=1}^T g(\hat{z}_t) - g(z_*)\right]\\
 &+ (1-C_f) \E\left[\sum_{i=1}^T g(\hat{z}_t) - g(z_*) + g(z_t) - g(\hat{z}_t) \right],\\
 &= \E\left[\sum_{i=1}^T g(\hat{z}_t) - g(z_*)\right] + (1-C_f) \E\left[\sum_{i=1}^T g(z_t) - g(\hat{z}_t) \right].
 \end{align}

 Use Lemma \ref{lem:gpucb} and our assumptions and we have
 \begin{align}
 \E[R_T] &\leq \widetilde{O}(\sqrt{T \gamma_T}) + O\left(\sum_{t=1}^T \epsilon (1-C_f) C_p C_g \right)\\
 &\leq \widetilde{O}(\sqrt{T \gamma_T}) + O\left(\sum_{t=1}^T 2 \sqrt{d} (1-C_f) C_p C_g (n+t)^{-\frac{1}{d+1}}\right)\\
 &\leq \widetilde{O}(\sqrt{T \gamma_T} + \sqrt{d} (n+T)^\frac{d}{d+1}),
 \end{align}
 where the second inequality is due to eq. \eqref{eq:epsilon} and number of data points is $n+t$ since the algorithm keeps adding points to $\cZ$.
 \end{proof}

 \noindent\textbf{Technical lemmas.} Our proof relies on the following two lemmas:

 \begin{lemma}[Regret bound of GP-LCB (Theorem 1 of \citet{srinivas2010gaussian})]\label{lem:gpucb}
 Let $\delta \in (0,1)$ and $\beta_t=2\log(m t^2\pi^2/6\delta)$. Running GP-UCB with $\beta_t$ for a sample $f$ of a GP with mean function zero and covariance function $k(x,x')$, we obtain a regret bound of $\widetilde{O}(\sqrt{T \gamma_T \log n})$ with high probability. Precisely,
 \begin{align}
 \P[R_T \geq \sqrt{C_1 T \beta_T \gamma_T} \ \forall T \geq 1] \geq 1 - \delta,
 \end{align}
 where $C_1 = 8/\log(1+\delta^{-2})$.
 \end{lemma}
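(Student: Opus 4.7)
The plan is to reproduce the original GP-UCB analysis of \citet{srinivas2010gaussian}. The argument rests on three pillars: high-probability Gaussian confidence intervals on $f$, an acquisition-rule inequality that turns per-step regret into a multiple of posterior standard deviation, and an information-theoretic identity that bounds $\sum_{t=1}^T \sigma_{t-1}^2(x_t)$ by the maximum information gain $\gamma_T$.

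First I would establish the pointwise confidence bound. Conditional on the past observations, $f(x) \sim \cN(\mu_{t-1}(x),\sigma_{t-1}^2(x))$, so the Gaussian tail inequality gives $\P[|f(x)-\mu_{t-1}(x)|>\sqrt{\beta_t}\sigma_{t-1}(x)] \leq e^{-\beta_t/2}$. With $\beta_t = 2\log(m t^2\pi^2/(6\delta))$, a union bound over an $m$-element discretization $D_t$ and over all $t\geq 1$ (using $\sum_{t\geq 1} t^{-2} = \pi^2/6$) makes the inequality uniform with probability at least $1-\delta$. For continuous $\cX$, I would choose the resolution $\varepsilon_t$ of $D_t$ to shrink polynomially in $t$, and then transfer the bound from $D_t$ to the true optimum $x_\ast$ via a high-probability Lipschitz estimate on the GP sample path, which in turn follows from a tail bound on the partial derivatives of a stationary GP. This discretization step is the main technical obstacle, since it requires kernel-regularity assumptions and produces the logarithmic factors absorbed into $\beta_t$; the rest of the proof is essentially mechanical.

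Next I would translate the LCB rule into a per-step regret bound: because $x_t \in \argmin_x \mu_{t-1}(x)-\sqrt{\beta_t}\sigma_{t-1}(x)$, combining with the two-sided confidence bound yields $r_t := f(x_t)-f(x_\ast) \leq 2\sqrt{\beta_t}\sigma_{t-1}(x_t)$ on the good event, and Cauchy--Schwarz together with monotonicity of $\beta_t$ gives $R_T^2 \leq 4T\beta_T \sum_{t=1}^T \sigma_{t-1}^2(x_t)$. Finally, applying the elementary inequality $s \leq \log(1+s)/\log(1+\sigma^{-2})$ valid for $s\in[0,\sigma^{-2}]$ to $s=\sigma^{-2}\sigma_{t-1}^2(x_t)$, and using the Gaussian mutual-information identity $I(f;y_{1:T}) = \half\sum_{t=1}^T \log(1+\sigma^{-2}\sigma_{t-1}^2(x_t))$, shows by the very definition of $\gamma_T$ that $\sum_{t=1}^T \sigma_{t-1}^2(x_t) \leq 2\gamma_T/\log(1+\sigma^{-2})$. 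Substituting this into the previous display gives $R_T \leq \sqrt{C_1 T\beta_T \gamma_T}$ with $C_1 = 8/\log(1+\sigma^{-2})$ on an event of probability at least $1-\delta$, which is the bound claimed in the lemma (modulo identifying the constant $C_1$ with its stated form).
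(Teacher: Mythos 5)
The paper offers no proof of this lemma---it is imported verbatim (as Theorem~1 of \citet{srinivas2010gaussian}) and used as a black box in the proof of Theorem~\ref{thm:main}---and your reconstruction is the standard GP-UCB argument from that source: Gaussian tail plus union bound to get uniform confidence intervals, the acquisition-rule inequality $r_t \le 2\sqrt{\beta_t}\,\sigma_{t-1}(x_t)$, Cauchy--Schwarz, and the information-gain bound $\sum_t \sigma_{t-1}^2(x_t) \le 2\gamma_T/\log(1+\sigma^{-2})$. This is correct and is exactly the intended derivation. Two small points your derivation surfaces: the constant you obtain, $C_1 = 8/\log(1+\sigma^{-2})$, shows that the paper's stated $C_1 = 8/\log(1+\delta^{-2})$ is a typo ($\sigma^2$ is the noise variance, not the confidence level), and the displayed probability statement should read $R_T \le \sqrt{C_1 T \beta_T \gamma_T}$ rather than $\ge$. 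Also note that the discretization-plus-Lipschitz step you describe belongs to Theorem~2 of Srinivas et al.\ (compact convex domains); the result as cited (Theorem~1, finite decision set of size $m$) needs only the finite union bound, though the continuous-domain version is arguably what the paper's setting $\cX \subseteq [0,1]^d$ actually requires.
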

 \begin{lemma}[Expected minimum distance (Lemma 19.2 of \citet{shalev2014understanding})]\label{lem:1nn}
 Let $C_1,..., C_r$ be a collection of covering sets of some domain set $\cX$. Let $S$ be a sequence of $n$ points sampled i.i.d. according to some probability distribution $\cD$ over $\cX$. Then,
 \begin{align}
 \E_{S \sim \cD^n} \left[ \sum_{i \in C_i \cap S = \emptyset } \P(C_i) \right] \leq \frac{r}{ne}.
 \end{align}
 \end{lemma}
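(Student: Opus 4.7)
The plan is to reduce the expected total mass of empty cells to a deterministic sum of $r$ terms of the form $\P(C_i)(1-\P(C_i))^n$ and then bound each term uniformly by $1/(en)$. The argument has three ingredients: linearity of expectation, the product form of the i.i.d.\ probability that a fixed set is missed, and a one-variable calculus bound.

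First I would rewrite the inner sum as $\sum_{i=1}^r \P(C_i)\,\mathbbm{1}\{C_i \cap S = \emptyset\}$, treating $\P(C_i)$ as a deterministic constant and $\mathbbm{1}\{C_i \cap S = \emptyset\}$ as the only $S$-dependent quantity. Linearity of expectation then gives
\begin{align*}
\E_{S\sim \cD^n}\Bigl[\sum_{i : C_i \cap S = \emptyset} \P(C_i)\Bigr]
= \sum_{i=1}^r \P(C_i)\,\P_{S\sim \cD^n}[C_i \cap S = \emptyset].
\end{align*}
Because $S$ consists of $n$ i.i.d.\ draws from $\cD$, the event $C_i \cap S = \emptyset$ is the intersection of $n$ independent events each of probability $1-\P(C_i)$, so the right-hand side equals $\sum_{i=1}^r \P(C_i)(1-\P(C_i))^n$. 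Observe that this step uses neither disjointness of the $C_i$ nor the covering property -- the lemma is really a statement about an arbitrary finite family of measurable sets, and the word ``covering'' is only relevant when the lemma is invoked in Theorem \ref{thm:main}.

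The main computation is then the uniform bound $\sup_{p\in[0,1]} p(1-p)^n \leq 1/(en)$. The cleanest route is to apply $1-p \leq e^{-p}$, giving $p(1-p)^n \leq p\,e^{-np}$; elementary one-variable calculus shows that $p \mapsto p e^{-np}$ on $[0,\infty)$ is maximized at $p = 1/n$ with value $1/(en)$. Applying this with $p = \P(C_i)$ and summing yields
\begin{align*}
\sum_{i=1}^r \P(C_i)(1-\P(C_i))^n \leq \frac{r}{en},
\end{align*}
which is exactly the claim.

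There is no genuine obstacle in this argument -- it is linearity of expectation, independence, and a one-variable optimization. The only points worth care are that $\mathbbm{1}\{C_i \cap S = \emptyset\}$ is the sole random quantity when $\P(C_i)$ is pulled outside the expectation, and that the inequality $1-p \leq e^{-p}$ is essentially tight here, since the exact maximizer $p^\star = 1/(n+1)$ of $p(1-p)^n$ gives $\tfrac{1}{n+1}\bigl(1-\tfrac{1}{n+1}\bigr)^n$, which still satisfies $\leq \tfrac{1}{en}$; hence the exponential shortcut loses nothing at leading order.
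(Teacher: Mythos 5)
Your proof is correct. The paper does not prove this lemma itself---it imports it verbatim as Lemma 19.2 of \citet{shalev2014understanding}---and your argument (linearity of expectation, the i.i.d.\ product form $\P[C_i\cap S=\emptyset]=(1-\P(C_i))^n$, and the uniform bound $p(1-p)^n\le pe^{-np}\le \tfrac{1}{ne}$) is exactly the standard proof given in that source, so there is nothing to compare beyond noting that your side remark is also accurate: neither disjointness nor the covering property is used in the lemma itself, only in its application within the regret analysis.
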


 \subsection{Implementation Details}
 \label{append:vae}

 Here we present the derivation of the the evidence lower bound of the log-likelihood in Eq (\ref{eq:elbo}).

 Given observation $x$, feasibility $c$, and the latent random variable $z$, let $p(x|c)$ denote the likelihood of $x$ conditioned on feasibility $c$, and $p_{}(x|z, c)$ denote the conditional distribution of $x$ given latent variable $z$ and its feasibility $c$. Let $p(z|c)$ denote the conditional prior of the latent random variable $z$ given its feasibility $c$, and $q(z|x, c)$ denote the posterior distribution of $z$ after observing $x$ and its feasibility $c$. The likelihood of observation $x$ given $c$ can be written as:
 \begin{align}
 p(x|c) &= \int p_{}(x,z|c)dz = \underset{q_{}(z | x, c)}{\mathbb{E}}\left[\frac{p_{}(x, z|c)}{q_{}(z|x, c)}\right].
 \end{align}

 By taking the logarithm on both sides and then applying Jensen’s inequality, we can get the lower bound of the log-likelihood $\mathcal{L}_{\mathrm{ELBO}}$ as follows:
 \begin{align}
 \log p(x|c) &= \log \underset{q_{}(z | x, c)}{\mathbb{E}}\left[\frac{p_{}(x, z|c)}{q_{}(z|x,c)}\right]  \\
  &\geq \underset{q_{}(z | x,c)}{\mathbb{E}}\left[\log \frac{p_{}(x, z|c)}{q_{}(z|x,c)}\right] \\
   &= \underset{q_{}(z | x,c)}{\mathbb{E}}\left[\log \frac{p_{}(x| z,c)p_{}(z|c)}{q_{}(z|x,c)}\right]   \\
     &= \underset{q_{}(z | x,c)}{\mathbb{E}}\left[\log {p_{}(x| z,c)}\right] + \underset{q_{}(z | x,c)}{\mathbb{E}}\left[\log \frac{p_{}(z|c)}{q_{}(z|x,c)}\right]     \\ 
     &= \underset{q_{}(z | x,c)}{\mathbb{E}}\left[\log p_{}(x|z,c)\right] - \mathrm{D}_{\mathrm{KL}}\left(q_{}(z | x,c) \| p_{}(z|c)\right).
 \end{align}

 In practice, we add a weight function $w(c)$ on the first term based on feasibility $c$ and a hyperparameter $\eta$ on the second term to modulate the penalization
 ratio.
 \begin{align}
 \mathcal{L}_{\mathrm{ELBO}} &= w(c)\underset{q_{}(z | x,c)}{\mathbb{E}}\left[\log p_{}(x|z,c)\right] \notag \\
 &\quad -\eta\mathrm{D}_{\mathrm{KL}}\left(q_{}(z | x,c) \| p(z|c)\right).
 \end{align}

 In practice, we approximate the conditional distribution $q(z|x, c)$ and $p(x|z, c)$ by building the $\encoder_{\phi}$ and $\decoder_{\theta}$ neural networks, and denote the approximated conditional distribution to be $q_{\phi}(z|x, c)$ and $p_{\theta}(x|z, c)$ respectively. In our formulation, the prior of the latent variable is influenced by the feasibility \( c \); however, this constraint can be easily relaxed to make the latent variables statistically independent of their labels, adopting a standard Gaussian distribution $\mathcal{N}(0, I)$. Both $q_{}(z|x,c)$ and $p_{}(x|z,c)$ are typically modeled as Gaussian distributions to facilitate closed-form KL divergence computation. Specifically, we introduce generator layers $l(\epsilon, x, c)$ and $l(\epsilon)$ to represent $q(z|x,c)$ and $p(z)$, respectively, transforming the random variable $\epsilon \sim \mathcal{N}(0, I)$ using the reparametrization trick \citep{sohl2015deep}. The log-likelihood of the first term is realized as the reconstruction loss, computed with the training data.

 \subsection{Ablation Study}
 \label{append:ablation}

 We conduct the ablation study of our \texttt{CageBO} algorithm with respect to the increasing observation sizes $n$ on both synthetic experiments and the real police redistricting problems in the $6\times 6$ grid and Atlanta, Georgia. In this study, we want to emphasize that one of the key features of our proposed conditional generative model is its ability to discover new feasible solutions. Thus, as the algorithm progresses, our CVAE model has the potential to expand the initial set of labeled observations. Figure \ref{fig:ablation_1} demonstrates that both the CVAE's performance and the subsequent decoding process improve as the observed dataset grows in size.

 \begin{figure}[H]  
     \centering
     \begin{subfigure}[b]{0.49\linewidth}
         \centering
         \includegraphics[width=\linewidth]{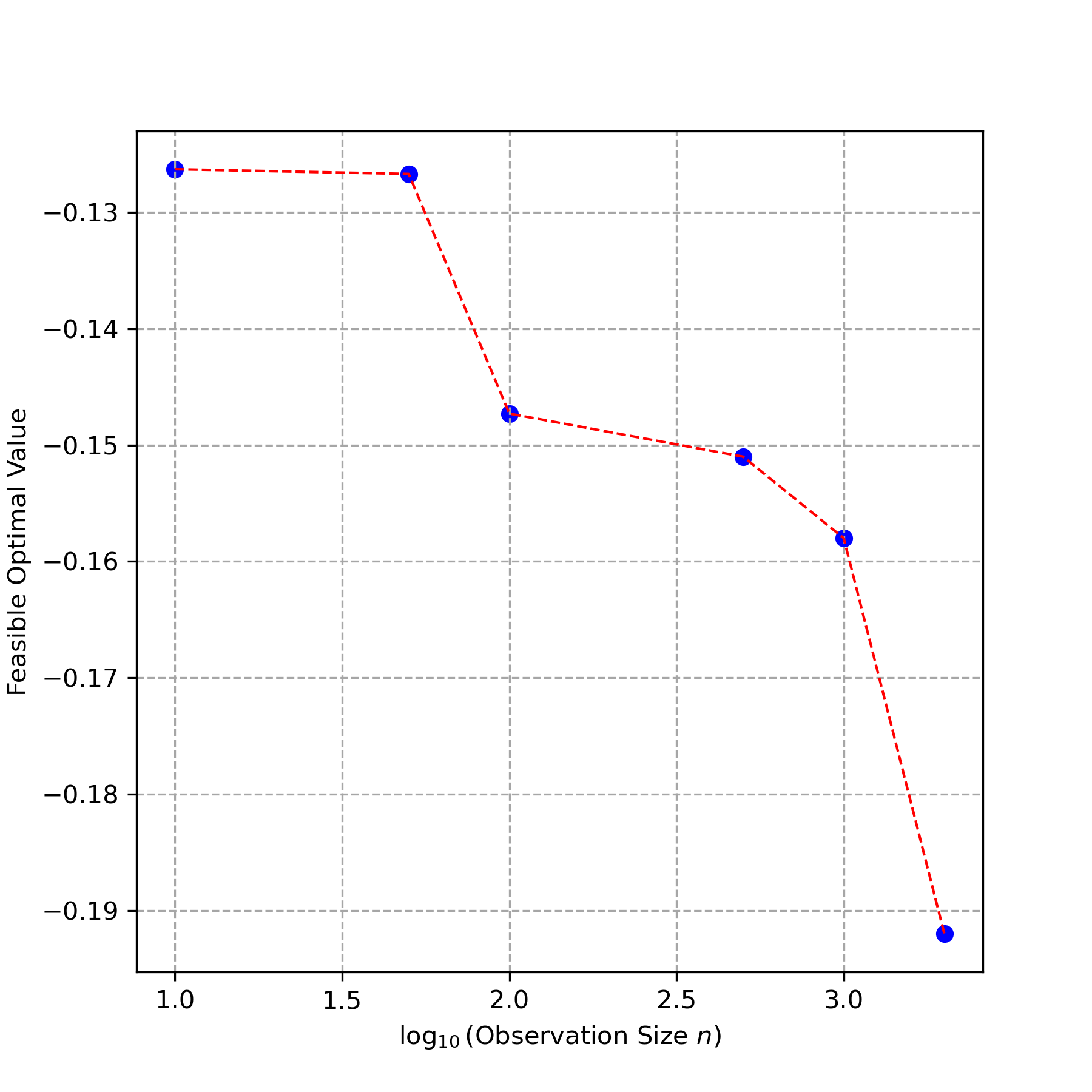}
         \caption{Keane's bump function}
     \end{subfigure}
     \begin{subfigure}[b]{0.49\linewidth}
         \centering
         \includegraphics[width=\linewidth]{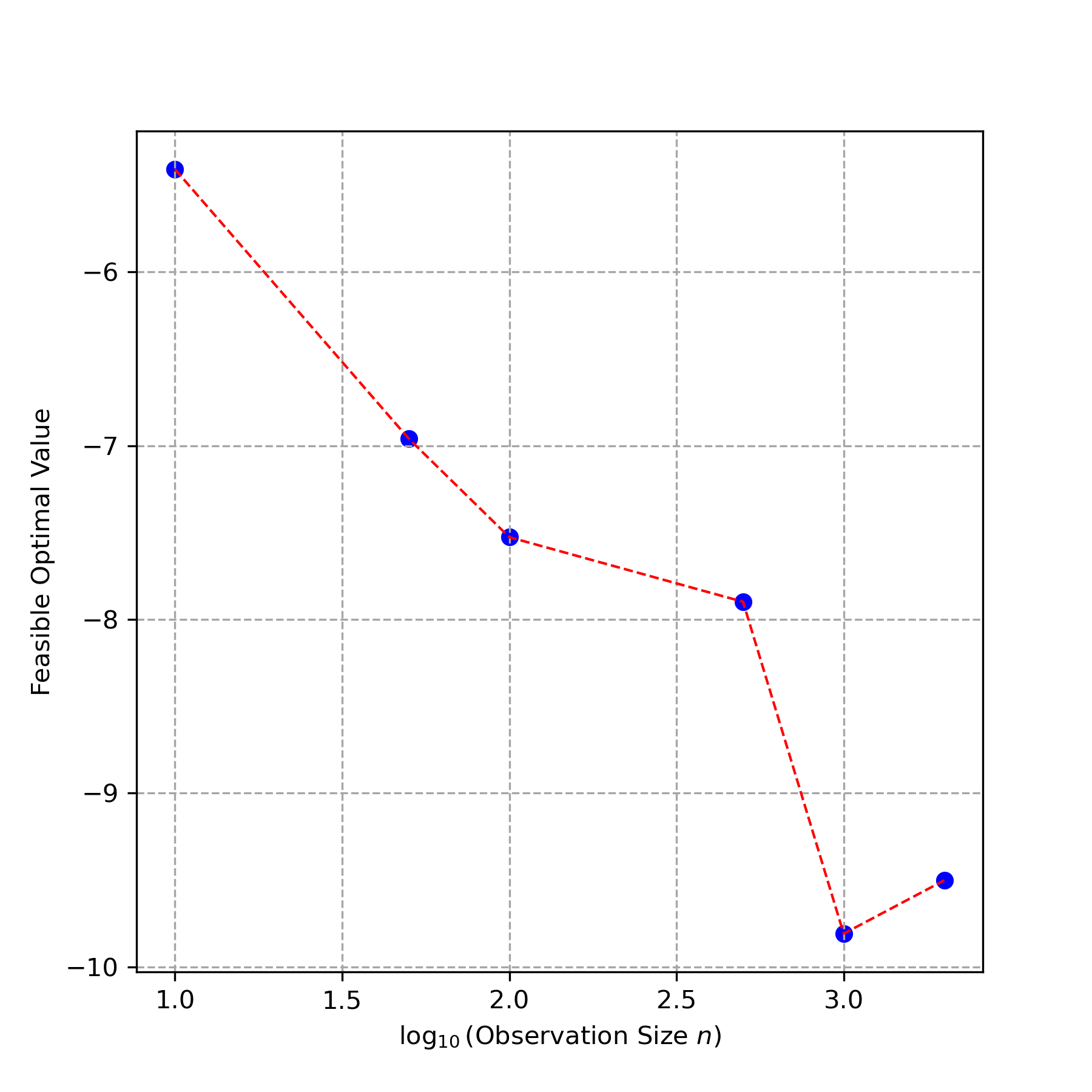}
         \caption{Michalewicz function}
     \end{subfigure}
    
    
     \begin{subfigure}[b]{0.49\linewidth}
         \centering
         \includegraphics[width=\linewidth]{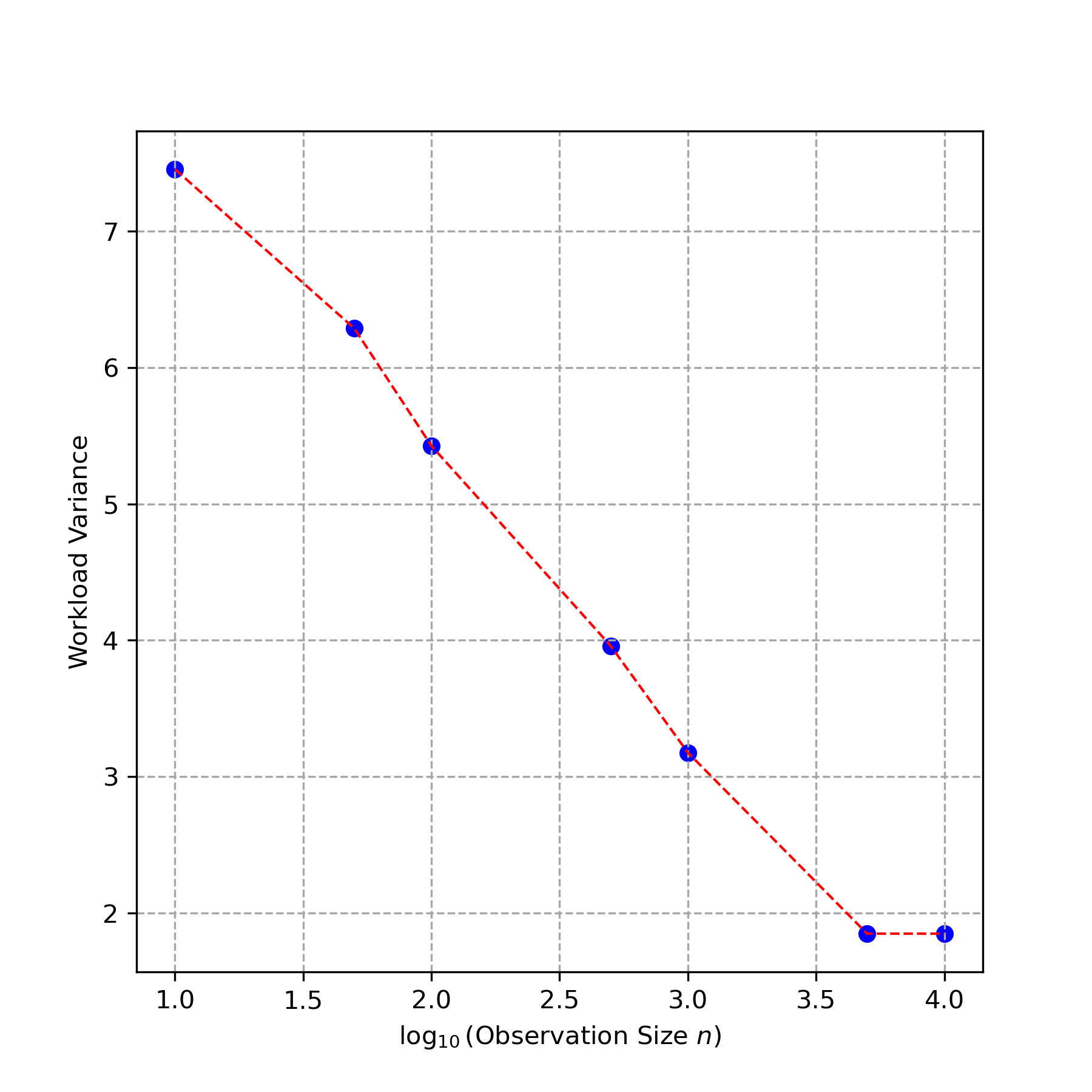}
         \caption{Synthetic $6 \times 6$ grid}
     \end{subfigure}
     \begin{subfigure}[b]{0.49\linewidth}
         \centering
         \includegraphics[width=\linewidth]{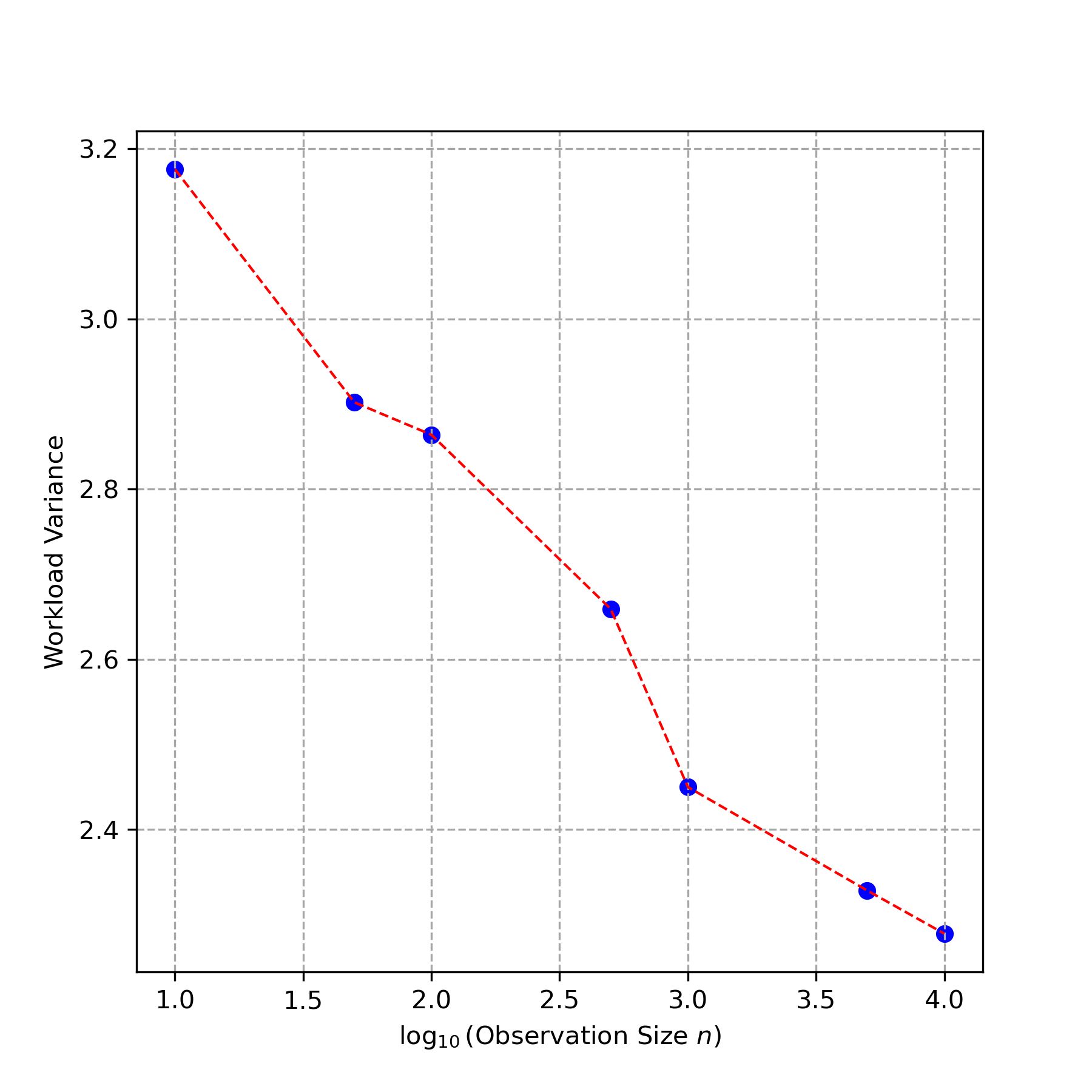}
         \caption{Atlanta}
     \end{subfigure}
    
     \caption{Optimal objective values obtained by \texttt{CageBO} with increasing observation sizes for (a) 30-dimensional Keane's bump function, (b) 30-dimensional Michalewicz function, (c) police redistricting problem in a synthetic $6 \times 6$ grid, and (d) police redistricting problem in Atlanta, Georgia. The x-axis represents the logarithm (base 10) of the observation size $n$.}
     \label{fig:ablation_1}
 \end{figure}

 Figure \ref{fig:ablation_2} (a) demonstrates that the subsequent decoding process improves as the observed dataset grows in size. Figure \ref{fig:ablation_2} (b) shows that the ability of our CAVE model to generate new feasible solutions also improves with the increasing size of the observed dataset.

 \begin{figure}[!t]
     \centering
     \begin{subfigure}[b]{0.49\linewidth}
         \centering
         \includegraphics[width=\linewidth]{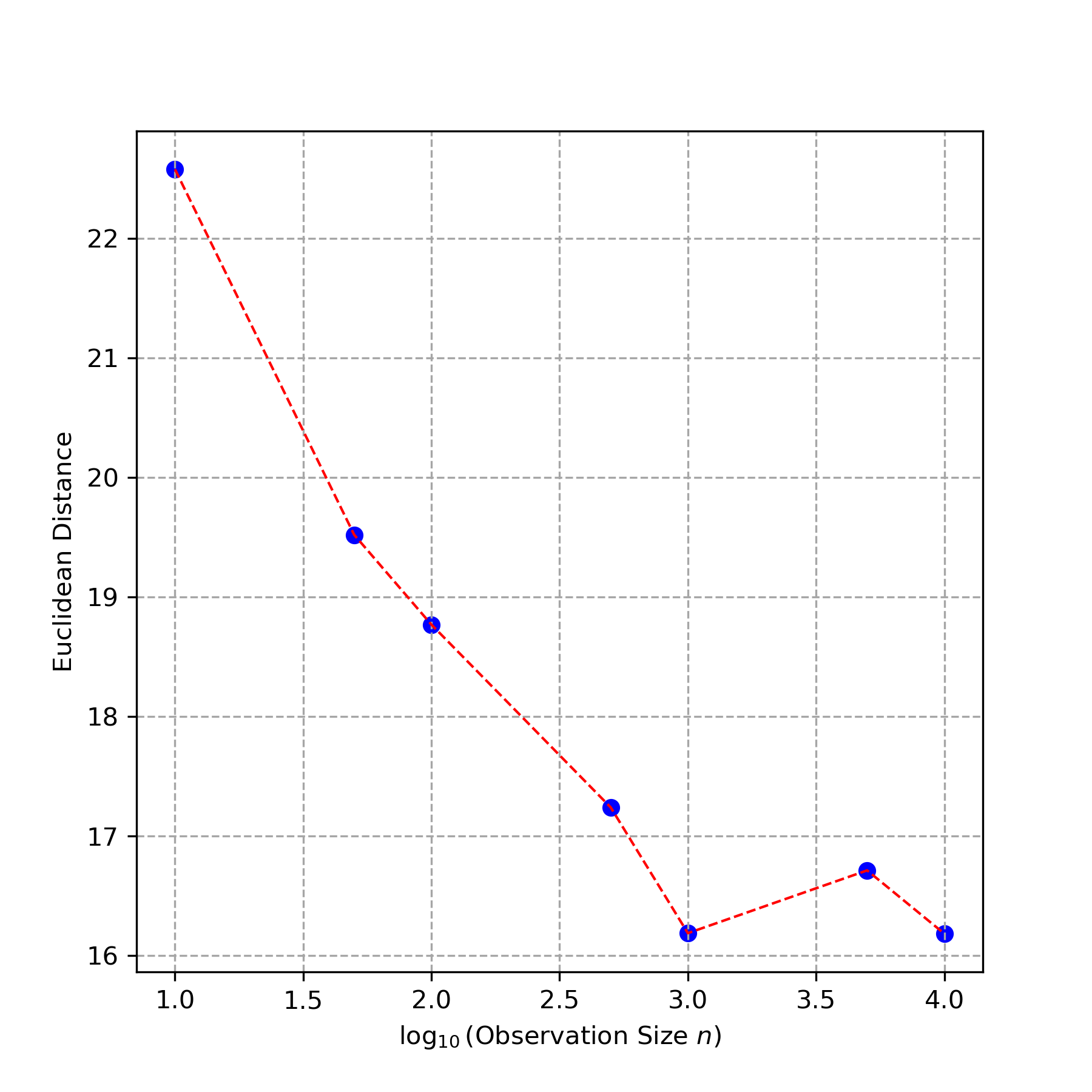}
         \caption{Distance to feasible solutions}
     \end{subfigure}
     \begin{subfigure}[b]{0.49\linewidth}
         \centering
         \includegraphics[width=\linewidth]{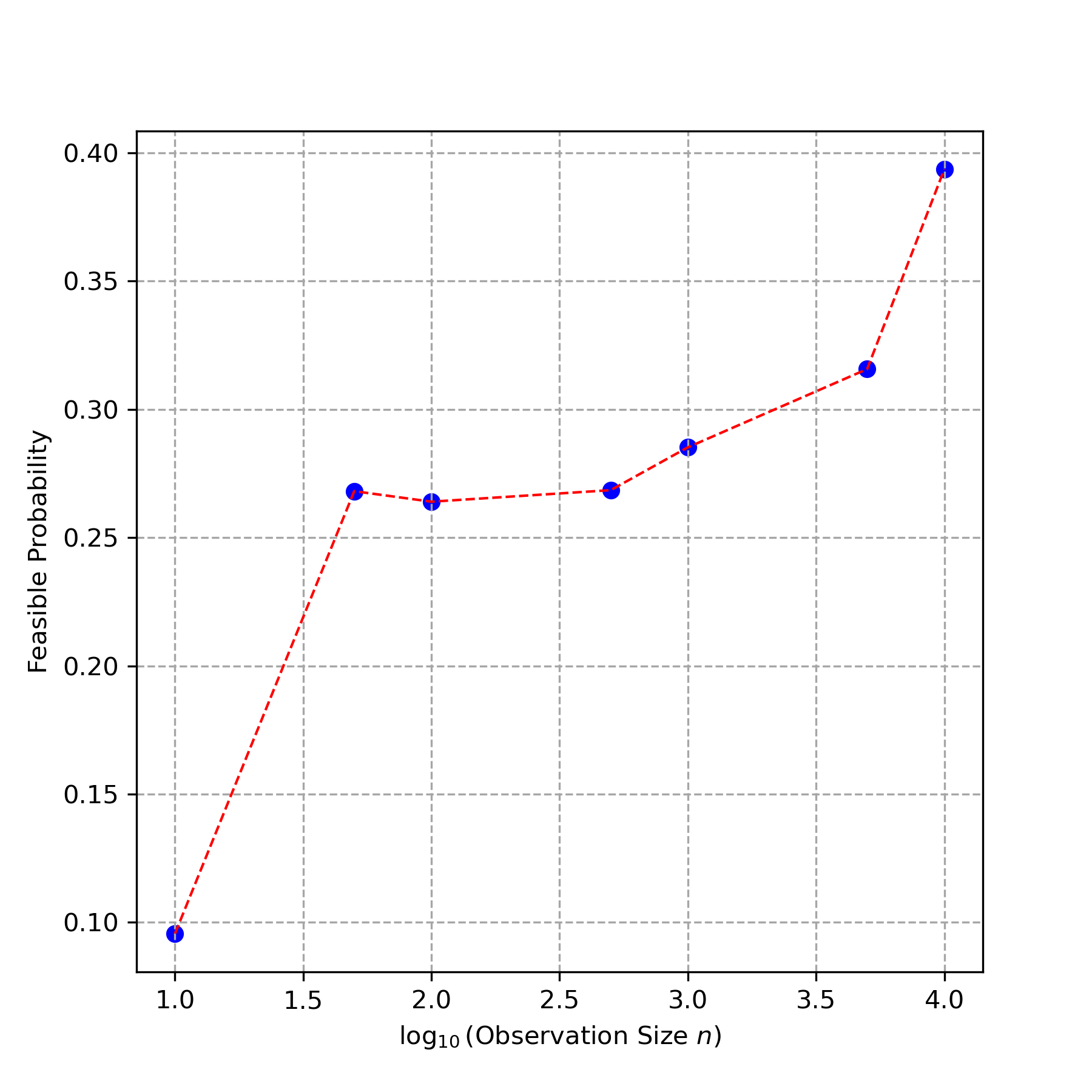}
         \caption{Feasible probability}
     \end{subfigure}
     \caption{(a) Euclidean distance to the closest feasible solution during the post-decoding process, given an infeasible generated solution. (b) The probability that a newly generated solution from the CVAE model is feasible. The x-axis represents the logarithm (base 10) of the observation size $n$.}
     \label{fig:ablation_2}
 \end{figure}

 \subsection{Debiasing for Fairness}
 \label{appdix:limitation}
 It is worth mentioning that this work shares the same concerns with prior research on latent-space Bayesian optimization using VAEs \cite{notin2021improving}, where the generated latent variables often fail to consistently represent the underlying data distributions. Such limitations could lead to social impacts similar to those observed with predictive policing systems \cite{meijer2019predictive}, which have faced significant controversy due to their reinforcement of existing biases.
 Specifically, redistricting to minimize workload variance based on historical 911 calls can inadvertently justify the concentration of policing efforts in districts with a history of higher incident rates.
 We recognize the critical importance of addressing these ethical concerns, particularly in the context of police redistricting based on historical data, which might inadvertently prioritize areas with historically higher incident rates.

 To mitigate these biases, our approach involves debiasing historical data using established machine learning techniques to ensure fairness in districting plans \cite{zhu2022spatiotemporal, jiang2024graph}. Furthermore, our \texttt{CageBO} algorithm is designed to accommodate implicit constraints aimed at promoting equitable districting, as suggested by practitioners. These measures align with our commitment to consider the broader impact of our work and the ethical implications of implementing machine learning solutions in public policymaking.

\end{document}